\newcommand{\td}{TD(0) }
\newcommand{\tdnospace}{TD(0)}
\newcommand{\T}{\mathcal{T}}
\newcommand{\Tpi}{\T^\pi}
\newcommand{\Ppi}{P^\pi}
\newcommand{\A}[1]{A_{#1}}
\newcommand{\APhi}{A_{\Phi}}
\renewcommand{\Re}{\operatorname{Re}}
\newcommand{\states}{\mathcal{S}}
\newcommand{\actions}{\mathcal{A}}
\newcommand{\stateactions}{\mathcal{H}}
\newcommand{\transitions}{P}
\newcommand{\rewards}{r}
\newcommand{\dist}{\Delta}
\newcommand{\spectrum}{\text{Spec}}
\newcommand{\spann}{\text{Span}}
\newtheorem{theorem}{Theorem}[section]
\newtheorem{corollary}{Corollary}[theorem]
\newtheorem{definition}{Definition}[section]
\def\eqref#1{equation~\ref{#1}}
\def\1{\bm{1}}
\DeclareMathAlphabet{\mathsfit}{\encodingdefault}{\sfdefault}{m}{sl}
\SetMathAlphabet{\mathsfit}{bold}{\encodingdefault}{\sfdefault}{bx}{n}
\def\gM{{\mathcal{M}}}
\newcommand{\E}{\mathbb{E}}
\newcommand{\R}{\mathbb{R}}
\newcommand{\C}{\mathbb{C}}
\DeclareMathOperator*{\argmax}{arg\,max}
\icmltitlerunning{Representations for Stable Off-Policy Reinforcement Learning}
\begin{document}

\twocolumn[
\icmltitle{Representations for Stable Off-Policy Reinforcement Learning}

\icmlsetsymbol{equal}{*}

\begin{icmlauthorlist}
\icmlauthor{Dibya Ghosh}{goo}
\icmlauthor{Marc G. Bellemare}{goo}
\end{icmlauthorlist}

\icmlaffiliation{goo}{Google Research}

\icmlcorrespondingauthor{Dibya Ghosh}{dibya.ghosh@berkeley.edu}

\icmlkeywords{Machine Learning, ICML}

\vskip 0.3in
]

\printAffiliationsAndNotice{}  %

\begin{abstract}

Reinforcement learning with function approximation can be unstable and even divergent, especially when combined with off-policy learning and Bellman updates. In deep reinforcement learning, these issues have been dealt with empirically by adapting and regularizing the representation, in particular with auxiliary tasks. This suggests that representation learning may provide a means to guarantee stability. In this paper, we formally show that there are indeed nontrivial state representations under which the canonical TD algorithm is stable, even when learning off-policy. We analyze representation learning schemes that are based on the transition matrix of a policy, such as proto-value functions, along three axes: approximation error, stability, and ease of estimation. In the most general case, we show that a Schur basis provides convergence guarantees, but is difficult to estimate from samples. For a fixed reward function, we find that an orthogonal basis of the corresponding Krylov subspace is an even better choice. We conclude by empirically demonstrating that these stable representations can be learned using stochastic gradient descent, opening the door to improved techniques for representation learning with deep networks.
\end{abstract}

\section{Introduction}

Value function learning algorithms are known to demonstrate divergent behavior under the combination of bootstrapping, function approximation, and off-policy data, what \citet{sutton18reinforcement} call the ``deadly triad'' \citep[see also][]{Hasselt2018DeepRL}. In reinforcement learning theory, it is well-established that methods such as Q-learning and TD(0) enjoy no general convergence guarantees under linear function approximation and off-policy data \citep{Baird1995ResidualAR, Tsitsiklis1996AnalysisOT}. Despite this potential for failure, Q-learning and other temporal-difference algorithms remain the methods of choice for learning value functions in practice due to their simplicity and scalability. 

In deep reinforcement learning, instability has been mitigated empirically through the use of auxiliary tasks, which shape and regularize the representation that is learned by the neural network. Methods using auxiliary tasks concurrently optimize the value function loss and an auxiliary representation learning objective such as visual reconstruction of observation \citep{Jaderberg2016ReinforcementLW}, latent transition and reward prediction \citep{Gelada2019DeepMDPLC}, adversarial value functions \citep{Bellemare2019AGP}, or inverse kinematics \citep{Pathak2017CuriosityDrivenEB}. In robotics, distributional reinforcement learning \citep{Bellemare2017ADP} in particular has proven a surprisingly effective auxiliary task \citep{bodnar19quantile,vecerik19practical,cabi19framework}. While the stability of such methods remains an empirical phenomenon, it suggests that a carefully chosen representation learning algorithm may provide a means towards formally guaranteed stability of value function learning.  

In this paper, we seek procedures for discovering representations that guarantee the stability of \tdnospace, a canonical algorithm for estimating the value function of a policy.  We analyze the expected dynamics of \tdnospace, with the aim of characterizing representations under which \td is provably stable. Learning dynamics of temporal-difference methods have been studied in depth in the context of a fixed state representation \citep{ Tsitsiklis1996AnalysisOT, Borkar2000TheOM, yu09basis, Maei2009ConvergentTL, Dalal2017FiniteSA}. We go one step further by considering this representation as a component that can actively be shaped, and study stability guarantees that emerge from various representation learning schemes.

We show that the stability of a state representation is affected by: 1) the space of value functions it can express, and 2) how it parameterizes this space. We find a tight connection between stability and the geometry of the transition matrix, enabling us to provide stability conditions for algorithms that learn features from the transition matrix of a policy \citep{Dayan1993ImprovingGF,Mahadevan2007ProtovalueFA,Wu2018TheLI, Behzadian2019FastFS} and rewards \citep{PetrikKrylov, Parr2007AnalyzingFG}. Our analysis reveals that a number of popular representation learning algorithms, including proto-value functions, generally lead to representations that are not stable, despite their appealing approximation characteristics.

As special cases of a more general framework, we study two classes of stable representations. The first class consists of representations that are approximately invariant under the transition dynamics \citep{Parr2008AnAO}, while the second consists of representations that remain stable under reparameterization. From this study, we find that stable representations can be obtained from common matrix decompositions and furthermore, as solutions of simple iterative optimization procedures. Empirically, we find that different procedures trade off learnability, stability, and approximation error. In the large data regime, the Schur decomposition and a variant of the Krylov basis \citep{PetrikKrylov} emerge as reliable techniques for obtaining a stable representation.

We conclude by demonstrating that these techniques can be operationalized using stochastic gradient descent on losses. 
We show that the Schur decomposition arises from the task of predicting the expectation of one's own features at the next time step, whereas a variant of the Krylov basis arises as from the task of predicting future expected rewards. This is particularly significant, as both of these auxiliary tasks have in fact been heuristically proposed in prior work \citep{francoislavet18combined,Gelada2019DeepMDPLC}. Our result confirms the validity of these auxiliary tasks, not only for improving approximation error but, more importantly, for taming the famed instabilities of off-policy learning.

\section{Background}
\label{sec:preliminaries}

We consider a Markov decision process (MDP) $\gM = (\states, \actions, \transitions, \rewards, \rho, \gamma)$ on a finite state space $\states$ and finite action space $\actions$. The state transition distribution is given by $\transitions: \states \times \actions \to \dist(\states)$, the reward function $\rewards: \states \times \actions \to \R$, the initial state distribution $\rho \in \dist(\states)$, and the discount factor $\gamma \in [0, 1)$. We write $\stateactions = \states \times \actions$ with $|\stateactions| = n$, and treat real-valued functions of state and action as vectors in $\R^n$.

A stochastic policy $\pi: \states \to \dist(\actions)$ induces a Markov chain on $\stateactions$ with transition matrix $\Ppi \in \R^{n \times n}$. The value function $Q^{\pi} \in \R^n$ for a policy $\pi$ is the expected return conditioned on the starting state-action pair,
\[Q^\pi(s_i,a_i) = \mathbb{E}_\pi\bigg [\sum_{t \geq 0} \gamma^t r(s_t, a_t) \, | \, s_0 = s_i, a_0 = a_i\bigg]. \]
The value function also satisfies Bellman's equation; in vector notation \citep{PutermanBook},
\[Q^\pi = r + \gamma P^\pi Q^\pi \]
from which we recover the concise $Q^\pi = (I - \gamma P^\pi)^{-1} r$.

\subsection{Approximate Policy Evaluation}
Approximate policy evaluation is the problem of estimating $Q^\pi$ from a family of value functions $\{Q_\theta\}_{\theta \in \R^d}$ given a distribution of transitions $(s,a,r,s') \sim \xi(s, a)\transitions(s'|s,a)$ \citep[c.f.][]{Bertsekas2011ApproximatePI}. We refer to $\xi \in \dist(\stateactions)$ as the \textit{data distribution}, and define $\Xi \in \R^{n \times n}$ a diagonal matrix with the elements of $\xi$ on the diagonal. If the data distribution is the stationary distribution of $\Ppi$, the data is \textit{on-policy} and \textit{off-policy} otherwise. We equip $\R^n$ with the inner product and norm that is induced by the data distribution: $\langle v_1, v_2 \rangle_{\Xi} = v_1^\top\Xi v_2$. Most concepts from Euclidean inner products extend to this setting; see Appendix \ref{sec:appendix_pd} for a review. 

We consider a two-stage procedure for estimating value functions \citep{levine17shallow,chung19twotimescale,Bertsekas2018FeaturebasedAA}. We first learn a  \textit{representation}, a $d$-dimensional mapping $\phi: \stateactions \to \R^d$, through an explicit representation learning step. After a representation is learned, approximate policy evaluation is performed with the family of value functions linear in the representation $\phi$: $Q_{\theta}(s,a) = \theta^\top \phi(s,a)$, where $\theta \in \R^d$ is a vector of weights. 

The representation corresponds to a matrix $\Phi \in \R^{n \times d}$ whose rows are the vectors $\phi(s, a)$ for different state-action pairs $(s,a)$. For clarity of presentation, we assume that $\Phi$ has full rank.  A representation is \textit{orthogonal} if $\Phi^\top \Xi\Phi = I$; these correspond to features which are normalized and uncorrelated. We write $\spann(\Phi)$ to denote the subspace of value functions expressible using $\Phi$, and denote $\Pi$ the orthogonal projection operator onto $\spann(\Phi)$, with closed form $\Pi = \Phi (\Phi^\top \Xi\Phi)^{-1}\Phi^\top\Xi$.

\subsection{Temporal Difference Methods}
TD fixed-point methods are a popular class of methods for approximate policy evaluation that attempt to find value functions that satisfy $Q = \Pi \Tpi Q$ \citep{Bradtke1996LinearLA, Gordon1995StableFA, Maei2009ConvergentTL, Dann2014PolicyEW}. If $\Pi \Tpi$ has a fixed-point, the solution is unique \citep{Lagoudakis2003LeastSquaresPI} and can be expressed as \[\theta_{TD}^* = (\Phi^T\Xi(I-\gamma P^\pi)\Phi)^{-1}\Phi^T\Xi r.\]

We study TD(0), the canonical update rule to discover this fixed point. With a step size $\eta > 0$ and transitions sampled $(s,a,r,s',a') \sim \xi(s,a)P(s'|s,a)\pi(a'|s')$, \td takes the update
\[\theta_{k+1} = \theta_k - \eta \nabla Q_{\theta_k}(s,a)\left(Q_{\theta_t}(s,a) - (r + \gamma Q_{\theta_t}(s',a'))\right).\]
In matrix form, this corresponds to an expected update over all state-action pairs:
\begin{equation}
\label{eq:td0_lfa}
\theta_{k+1} = \theta_k - \eta\left(\Phi^\top\Xi(I-\gamma \Ppi)\Phi \theta_k - \Phi^\top\Xi r\right).
\end{equation}
With appropriately chosen decay of the step size, the stochastic update will converge if the expected update converges \citep{stochasticapproximation, Tsitsiklis1996AnalysisOT}. However, these updates are not the gradient of any well-defined objective function except in special circumstances \cite{barnard93temporaldifference,Ollivier2018ApproximateTD}, and hence do not inherit convergence properties from the classical optimization literature. The main aim of this paper is to provide conditions on the representation matrix $\Phi$ under which the update is convergent. We are especially interested in schemes that are convergent independent of the data distribution $\xi$.

We will characterize the stability of \td and a representation through the spectrum of relevant matrices. For a matrix $A \in \R^{k \times k}$, the spectrum is the set of eigenvalues of $A$, written as $\spectrum(A) = \{\lambda_1, \dots, \lambda_k\} \subset \C$. The spectral radius $\rho(A)$ denotes the maximum magnitude of eigenvalues. Stochastic transition matrices $\Ppi$ satisfy  $\rho(\Ppi) = 1$. We consider a potentially nonsymmetric matrix $A \in \R^{k \times k}$ to be positive definite if all non-zero vectors $x \in \R^k$ satisfy $\langle x, Ax\rangle > 0$.

\subsection{Representation Learning}

In reinforcement learning, a large class of methods have focused on constructing a representation $\Phi$ from the transition and reward functions, beginning perhaps with proto-value functions \citep{Mahadevan2007ProtovalueFA}.
Involving $P^\pi$ and $r$ in the representation learning process is natural, since the value function $Q^\pi$ is itself constructed from these two objects.
As we shall later see, the stability criteria for these are also simple and coherent.
Additionally, there is a large body of literature on the ease (or difficulty) with which these methods can be estimated from samples, and by proxy are amenable to gradient-descent schemes.
Here we review the most common of these representation learning methods along with a few obvious extensions.
Table \ref{table:representation_table} shows how their construction arises from different matrix operations on $P^\pi$ and, in the case of the Krylov basis, of $r$.

\textbf{Laplacian Representations:} Proto-value functions \citep{Mahadevan2007ProtovalueFA} capture the high-level structure of an environment, using the bottom eigenvectors of the normalized Laplacian of an undirected graph formed from environment transitions. This formalism extends to reversible Markov chains with on-policy data, but does not generalize to directional transitions, stochastic dynamics, and off-policy data. In the general setting, the Laplacian representation \citep{Wu2018TheLI} uses the top eigenvectors of the symmetrized transition matrix (EigSymm) . We demonstrate in Section \ref{sec:pd_repr} that when data is off-policy, modifying the representation to omit eigenvectors whose eigenvalues exceed a threshold can provide strong stability guarantees.

\textbf{Singular Vector Representations: } Representations using singular vectors have been well-studied in representation learning for RL, because they are expressive and often yield strong performance guarantees. Fast Feature Selection \cite{Behzadian2019FastFS} uses the top left singular vectors of the transition matrix as features. Similarly, \citet{stachenfeld14design} and \citet{machado2018eigenoption} use the top left singular vectors of the successor representation \citep{Dayan1993ImprovingGF}, a time-based representation which predicts future state visitations: $\Psi = (I-\gamma \Ppi)^{-1}$. We discover in Section \ref{sec:svd_repr} that the SVD objective of minimizing the norm of approximation error fails to preserve the spectral properties of transition matrices needed for stability, and can induce divergent behavior in \td. In contrast, we show that decompositions constrained to preserve the spectrum of the transition matrix, such as the Schur decomposition, guarantee stability and performance. 

\textbf{Reward-Informed Methods:} If the reward structure of the problem is known apriori, a representation can focus its capacity on modelling future rewards and how they diffuse through the environment. Towards this goal, \citet{PetrikKrylov} suggested the Krylov basis generated by $\Ppi$ and $r$ as features. Bellman Error Basis Functions (BEBFs) \citep{Parr2007AnalyzingFG} iteratively builds a representation by adding the Bellman error for the best solution found so far as a new feature. \citet{Parr2008AnAO} show that under certain initial conditions for BEBFs, both representations span the Krylov subspace $\mathcal{K}_d(\Ppi, r)$ generated by rewards. Although no general guarantees exist for arbitrary rewards, we discover that when rewards are easily predictable, orthogonal representations that span this Krylov subspace have stability guarantees.

\begin{table}[t]
\vskip 0.15in
\begin{center}
\begin{small}
\begin{sc}
\begin{tabular}{lc}
\toprule
Representation & Decomposition \\
\midrule
Proto-value Functions~$^{1}$    & Eig($\Ppi$)\\
Laplacian (EigSymm)    & Eig($(\Ppi + \Xi^{-1}{\Ppi}^\top\Xi)$)\\
Safe EigSymm~$^{2}$    & Eig($(\Ppi + \Xi^{-1}{\Ppi}^\top\Xi)$)\\
SVD    & SVD($\Ppi$)\\
SVD of Successor Rep.    & SVD($(I-\gamma \Ppi)^{-1}$)\\
Schur   & Schur($\Ppi$)\\
Krylov Basis   & $\{r, \Ppi r, \dots (\Ppi)^{d-1}r\}$\\
Orthog Krylov Basis   & Orthog($\mathcal{K}_d(\Ppi, r)$)\\
\bottomrule
\end{tabular}
\end{sc}
\end{small}
\end{center}

\caption{Representation learning algorithms that learn features from the transition matrix and rewards. \textsc{Eig} is the spectral eigendecomposition, \textsc{SVD} the singular value decomposition, \textsc{Schur} the Schur decomposition, and \textsc{Orthog} an arbitrary orthogonal basis.
~~~~~~~~~~~~~~~~~~~~~~~~~~~~~~~~~~~~~~~~~$^{1}$ Only defined for reversible Markov chains with on-policy data.
~$^{2}$ Discards a partial set of features (see Section \ref{sec:pd_repr}).  }
\vskip -0.3in
\label{table:representation_table}
\end{table}

\section{Stability Analysis of Arbitrary Representations}

To begin, we study the stability of \td given an arbitrary representation. For conciseness, we call \td the algorithm whose expected update is described by \eqref{eq:td0_lfa}; this is an algorithm which may or may not be off-policy (according to $\Xi$ and $P^\pi$), and learns a linear approximation of the value function $Q^\pi$ using features $\Phi$. The following formalizes our notion of stability.

\begin{definition} \td is \textbf{stable} if there is a step-size $\eta > 0$  such that when taking updates according to \eqref{eq:td0_lfa} from any $\theta_0 \in \R^d$, we have $\lim_{k \to \infty} \theta_k = \theta_{TD}^*$.
\end{definition}

\subsection{Learning Dynamics}

For a sufficiently small step-size $\eta$, the discrete update of \eqref{eq:td0_lfa} behaves like the continuous-time dynamical system
\begin{equation}
    \frac{\partial}{\partial t}(\theta_t - \theta_{TD}^*) = -\APhi(\theta_t - \theta_{TD}^*), \label{eq:linear_dynamical_system}
\end{equation}
whose behaviour is driven by the \emph{iteration matrix}
\begin{equation*}
    \APhi = \Phi^\top\Xi(I-\gamma \Ppi)\Phi .
\end{equation*}
Put another way, the learned parameters $\theta$ evolve approximately according to the linear dynamical system defined by the iteration matrix $\APhi$.
As might be expected, \td is stable if this linear dynamical system is globally stable in the usual sense \citep{Borkar2000TheOM}.

The iteration matrix -- and as we shall see, the global stability of the linear dynamical system -- depends on the data distribution, the representation, and, to a lesser extent, on the discount factor. It does not, however, depend on the reward function, which only affects the accuracy of the TD fixed-point solution $\theta_{TD}^*$.

\subsection{Stability Criteria}

To understand the behaviour of \tdnospace, it is useful to contrast it with gradient descent on a weighted squared loss
\begin{equation*}
    \ell(\theta) = (\Phi \theta - \mathbf{y})^\top \Xi (\Phi \theta - \mathbf{y}),
\end{equation*}
where $\mathbf{y}$ is a vector of supervised targets. Gradient descent on $\ell(\theta)$ also corresponds to a linear dynamical system, albeit one whose iteration matrix is symmetric and positive definite.
The behaviour of \td is complicated by the fact that $\APhi$ is not guaranteed to be positive definite or symmetric,
as the matrix $\Xi P^\pi$ itself is in general neither.
In fact, the documented good behaviour of \td arises in contexts where $\APhi$ itself is closer to a gradient descent iteration matrix: positive definite when the data distribution is on-policy \citep{Tsitsiklis1996AnalysisOT}, and symmetric when the Markov chain described by $P^\pi$ is reversible \citep{Ollivier2018ApproximateTD}.

Following a well-known result from linear system theory \citep[see e.g.][]{zadeh2008linear}, the asymptotic behavior of \td more generally depends on the eigenvalues of the iteration matrix.
\begin{restatable}{proposition}{generalstabilityconditions}
\label{prop:general_stability_conditions}
 \td is stable if and only if the eigenvalues of the implied iteration matrix $\APhi$ have positive real components, that is
 \begin{equation*}
     \spectrum(\APhi) \subset \mathbb{C}_+ := \{z: Re(z) > 0\} .
 \end{equation*}
We say that a particular choice of representation $\Phi$ is \textbf{stable} for 
$(\Ppi, \gamma, \Xi)$ when $\APhi$ satisfies the above condition.
\end{restatable}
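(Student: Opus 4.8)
The plan is to reduce the statement to the textbook characterization of when a linear iteration, equivalently the linear dynamical system \eqref{eq:linear_dynamical_system}, is globally asymptotically stable. First I would pass to error coordinates. Suppose for the moment that $\APhi$ is invertible, so that $\theta_{TD}^*$ is well-defined; using $\APhi \theta_{TD}^* = \Phi^\top\Xi r$, the expected update \eqref{eq:td0_lfa} becomes the autonomous recursion $\theta_{k+1} - \theta_{TD}^* = (I - \eta\APhi)(\theta_k - \theta_{TD}^*)$, hence $\theta_k - \theta_{TD}^* = (I - \eta\APhi)^k(\theta_0 - \theta_{TD}^*)$. Thus \td is stable (for this $\eta$) exactly when $(I - \eta\APhi)^k \to 0$, i.e.\ when $\rho(I - \eta\APhi) < 1$ — this is the standard fact that the powers of a matrix vanish iff its spectral radius is below one, one direction coming from the Jordan form and the other from $\rho(M) \le \|M^k\|^{1/k}$. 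So the whole proposition comes down to: there exists $\eta > 0$ with $\rho(I - \eta\APhi) < 1$ if and only if $\spectrum(\APhi) \subset \mathbb{C}_+$.

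For the forward direction, note $\spectrum(I - \eta\APhi) = \{1 - \eta\mu : \mu \in \spectrum(\APhi)\}$, and for $\mu = a + ib$ one has $|1-\eta\mu|^2 = 1 - 2\eta a + \eta^2|\mu|^2$. If every eigenvalue satisfies $a = \Re(\mu) > 0$, then $\mu \ne 0$ and $|1-\eta\mu| < 1$ whenever $0 < \eta < 2\Re(\mu)/|\mu|^2$; since $\APhi$ has only finitely many eigenvalues, any $\eta$ below $\min_{\mu \in \spectrum(\APhi)} 2\Re(\mu)/|\mu|^2$ (a strictly positive quantity) gives $\rho(I - \eta\APhi) < 1$. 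In particular such an $\APhi$ has no zero eigenvalue, so it is invertible, $\theta_{TD}^*$ genuinely exists, and the reduction above is legitimate. Conversely, suppose some $\mu \in \spectrum(\APhi)$ has $\Re(\mu) \le 0$. If $\mu = 0$, then $\APhi$ is singular and $\theta_{TD}^*$ is not (uniquely) defined, so stability fails by definition; moreover the trajectory restricted to the generalized eigenspace at $0$ does not converge to the origin. If $\mu \ne 0$ with $\Re(\mu) \le 0$, then $|1-\eta\mu|^2 = 1 - 2\eta\Re(\mu) + \eta^2|\mu|^2 \ge 1 + \eta^2|\mu|^2 > 1$ for every $\eta > 0$, so $\rho(I - \eta\APhi) > 1$ for all step sizes; picking $\theta_0 - \theta_{TD}^*$ in the real invariant subspace attached to $\mu$ (the span of the real and imaginary parts of an eigenvector when $\mu \notin \R$) yields an iterate whose norm does not vanish. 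In every case \td is not stable, which completes the equivalence.

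I do not expect a deep obstacle here — the result sits on top of a classical linear-systems fact — but the points that need care are: (i) $\APhi$ is neither symmetric nor necessarily diagonalizable, so the ``powers vanish'' step must be argued through the spectral radius / Jordan form rather than a naive eigenbasis expansion; (ii) $\APhi$ is real but may have genuinely complex eigenvalues, so the divergent initial condition in the converse must be extracted from a real two-dimensional invariant subspace; and (iii) the degenerate case $0 \in \spectrum(\APhi)$, where $\theta_{TD}^*$ itself is ill-posed, should be folded into the ``only if'' direction. As an alternative presentation, one could argue entirely through the continuous-time system \eqref{eq:linear_dynamical_system}, whose global asymptotic stability is classically equivalent to $\spectrum(\APhi) \subset \mathbb{C}_+$, and then observe that its forward-Euler discretization with sufficiently small $\eta$ inherits this; I prefer the discrete analysis above because it is self-contained and directly exhibits an admissible positive step size.
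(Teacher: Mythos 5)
Your proposal is correct and follows essentially the same route as the paper's proof: pass to error coordinates via $\APhi\theta_{TD}^* = \Phi^\top\Xi r$, unroll to $(I-\eta\APhi)^k(\theta_0-\theta_{TD}^*)$, reduce stability to $\rho(I-\eta\APhi)<1$ for some $\eta>0$, and verify the equivalence with $\spectrum(\APhi)\subset\C_+$ by the computation $|1-\eta\mu|^2 = 1-2\eta\Re(\mu)+\eta^2|\mu|^2$. Your treatment is somewhat more careful than the paper's (explicitly handling the singular case $0\in\spectrum(\APhi)$, non-diagonalizability, and complex eigenvalues of a real matrix), but the argument is the same.
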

\begin{proof}
See Appendix \ref{sec:appendix_proofs} for all proofs.
\end{proof}
Whenever the transition matrix, data distribution, and discount factor is evident, we will refer to $\Phi$ simply as a stable representation. 

\subsection{Effect of Subspace Parametrization}
\label{sec:representation_intuitions}

When measuring the approximation error that arises from a particular representation $\Phi$, it suffices to consider the subspace spanned by the columns of $\Phi$. It therefore makes no difference whether these columns are orthogonal (corresponding, informally speaking, to correlated features) or not. By contrast, we now show that the stability of the learning process does depend on how the linear subspace spanned by $\Phi$ is parametrized.

Recall that $\Phi$ is orthogonal if $\Phi^\top \Xi \Phi = I$. As it turns out, the stability of an orthogonal representation is determined by the \emph{induced transition matrix} $\Pi P^\pi \Pi$, which describes how next-state features affect the \td value estimates.

\begin{restatable}{proposition}{orthogonalrepr}
\label{prop:orthogonal_repr}
An orthogonal representation $\Phi$ is stable if and only if the real part of the eigenvalues of the induced transition matrix $\Pi P^\pi \Pi$ is bounded above, according to
\[\spectrum(\Pi\Ppi\Pi) \subset \{z \in \C : \Re(z) < \tfrac{1}{\gamma}~\}\]
In particular, $\Phi$ is stable if $\rho(\Pi\Ppi\Pi) < \tfrac{1}{\gamma}$.
\end{restatable}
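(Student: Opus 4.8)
The plan is to reduce the stability condition of Proposition~\ref{prop:general_stability_conditions} — a statement about $\spectrum(\APhi)$ — to a statement about $\spectrum(\Pi\Ppi\Pi)$, in two moves: first rewrite $\APhi$ using orthogonality, then match the relevant eigenvalues of the resulting $d\times d$ matrix with those of the (much larger) induced transition matrix. \textbf{Step 1.} Since $\Phi^\top\Xi\Phi = I$, expand
\[\APhi = \Phi^\top\Xi(I-\gamma\Ppi)\Phi = I - \gamma\,\Phi^\top\Xi\Ppi\Phi =: I - \gamma M,\]
so that $\spectrum(\APhi) = \{\,1-\gamma\mu : \mu\in\spectrum(M)\,\}$. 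Proposition~\ref{prop:general_stability_conditions} then says $\Phi$ is stable iff $\Re(1-\gamma\mu) = 1-\gamma\Re(\mu) > 0$ for every $\mu\in\spectrum(M)$, i.e. iff $\Re(\mu) < \tfrac1\gamma$ for all $\mu\in\spectrum(M)$ (the degenerate case $\gamma=0$ gives $\APhi=I$ and is stable outright).

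\textbf{Step 2.} I would show the nonzero eigenvalues of $M = \Phi^\top\Xi\Ppi\Phi$ coincide with those of $\Pi\Ppi\Pi$. Using orthogonality, $\Pi = \Phi(\Phi^\top\Xi\Phi)^{-1}\Phi^\top\Xi = \Phi\Phi^\top\Xi$, hence
\[\Pi\Ppi\Pi = \Phi\Phi^\top\Xi\,\Ppi\,\Phi\Phi^\top\Xi = \Phi M\,\Phi^\top\Xi.\]
Now apply the standard fact $\spectrum(AB)\setminus\{0\} = \spectrum(BA)\setminus\{0\}$ with $A = \Phi M \in \R^{n\times d}$ and $B = \Phi^\top\Xi \in \R^{d\times n}$: here $BA = \Phi^\top\Xi\Phi M = M$ and $AB = \Phi M\Phi^\top\Xi = \Pi\Ppi\Pi$, so $\spectrum(\Pi\Ppi\Pi)\setminus\{0\} = \spectrum(M)\setminus\{0\}$. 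Equivalently, one can argue via the $\Xi$-orthogonal splitting $\R^n = \spann(\Phi)\oplus\ker\Pi$: on $\ker\Pi$ the operator $\Pi\Ppi\Pi$ vanishes, and on $\spann(\Phi)$, writing vectors as $\Phi v$ and using $\Pi\Phi=\Phi$, $\Phi^\top\Xi\Phi=I$, one gets $\Pi\Ppi\Pi\,\Phi v = \Phi M v$, so its restriction to its range is similar to $M$. Either way, $\spectrum(\Pi\Ppi\Pi) = \spectrum(M)\cup\{0\}$ as sets, with all nonzero elements shared.

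\textbf{Step 3.} Combine: since $\gamma\in(0,1)$ gives $\tfrac1\gamma > 0$, the possibly-extra eigenvalue $0$ trivially satisfies $\Re(0) < \tfrac1\gamma$, so "$\Re(\mu)<\tfrac1\gamma$ for all $\mu\in\spectrum(M)$" is equivalent to "$\Re(\lambda)<\tfrac1\gamma$ for all $\lambda\in\spectrum(\Pi\Ppi\Pi)$" — exactly the claimed criterion, via Step~1. For the sufficient condition: $\rho(\Pi\Ppi\Pi)<\tfrac1\gamma$ forces $\Re(\lambda)\le|\lambda|\le\rho(\Pi\Ppi\Pi)<\tfrac1\gamma$ for every eigenvalue $\lambda$, so the real-part bound holds.

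The step needing the most care is Step~2 — correctly matching $\spectrum(M)$ with $\spectrum(\Pi\Ppi\Pi)$ while keeping track that $\Pi$ is the $\Xi$-orthogonal projection (so $\ker\Pi$ is the $\Xi$-orthogonal, not Euclidean, complement of $\spann(\Phi)$) and that the $n-d$ spurious zero eigenvalues it introduces are harmless because $0 < \tfrac1\gamma$. Everything else is bookkeeping with the single identity $\Phi^\top\Xi\Phi = I$.
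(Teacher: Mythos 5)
Your proof is correct and follows essentially the same route as the paper: use $\Phi^\top\Xi\Phi = I$ to write $\APhi = I - \gamma\,\Phi^\top\Xi\Ppi\Phi$, then transfer the spectrum to $\Pi\Ppi\Pi$ via the cyclicity identity $\spectrum(AB)\setminus\{0\} = \spectrum(BA)\setminus\{0\}$ and the fact that $\Pi = \Phi\Phi^\top\Xi$ for orthogonal $\Phi$. Your Step~2 is in fact slightly more careful than the paper's one-line invocation of cyclicity, since you explicitly note that the $n-d$ spurious zero eigenvalues of the $n\times n$ matrix are harmless because $0 < \tfrac{1}{\gamma}$.
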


Although the original transition matrix satisfies the spectral radius condition with $\rho(\Ppi) = 1$, the induced transition matrix can have eigenvalues beyond the stable region and lead to learning instability.

More generally, a representation $\Phi$ can be decomposed into an orthogonal basis and reparametrization $\Phi = \Phi' R$, where $\Phi'$ is an orthogonal representation spanning the same space as $\Phi$ and $R \in \R^{d \times d}$ is a reparametrization for $\Phi$. The eigenvalues of the iteration matrix can be re-expressed as 
\[\text{Spec}(\APhi) = \text{Spec}(R^\top\A{\Phi'}R) = \text{Spec}(RR^\top\A{\Phi'}).\]

Despite spanning the same space, $\Phi$ and $\Phi'$ have iteration matrices with different spectra: $\text{Spec}(\APhi) \ne \text{Spec}(\A{\Phi'})$. As a result, the stability of $\Phi$ not only depends on the spectrum of $\A{\Phi'}$, but also how the reparametrization $R$ shifts these eigenvalues. Put another way, $\Phi$ may be unstable even if its orthogonal equivalent $\Phi'$ is stable. The classical example of divergence given by \citet{Baird1995ResidualAR} can be attributed to this phenomenon. In this example, the constructed representation expresses the same value functions as a stable tabular representation, but parametrizes the space in an different way and thus induces divergence. 

\subsection{Singular Vector Representations}
\label{sec:svd_repr}

The singular value decomposition is an appealing approach to representation learning: choosing vectors corresponding to large singular values guarantees, in a certain measure, low approximation error \citep{stachenfeld14design, Behzadian2019FastFS}. Unfortunately, as now we show, doing so may be inimical to stability.

We denote $\Phi_{SVD}$ and $\Phi_{SR}$ the representations with the top $d$ left singular vectors of $\Ppi$ and $\Psi$ as features. Recall that these vectors arise as part of a solution to a low-rank matrix approximation $\|A-\hat{A}\|_\Xi$. We write $\hat{\Ppi}, \hat{\Psi} \in \R^{n \times n}$ to denote the corresponding rank-$d$ approximations.

\begin{restatable}[SVD]{proposition}{svdconditions}
\label{prop:svd_conditions}
The representation $\Phi_{SVD}$ is stable  if and only if the low-rank approximation $\hat{\Ppi}$ satisfies \[\rho(\hat{\Ppi}) < \tfrac{1}{\gamma}.\] 
\end{restatable}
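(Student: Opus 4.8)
The plan is to reduce the statement to Proposition~\ref{prop:orthogonal_repr} by recognizing $\Phi_{SVD}$ as an orthogonal representation and identifying the rank-$d$ truncation $\hat{\Ppi}$ with the compression of $\Ppi$ onto the feature subspace. First I would note that, by construction, the columns of $\Phi_{SVD}$ are the top $d$ left singular vectors of $\Ppi$ taken in the $\Xi$-geometry, so $\Phi_{SVD}^\top\Xi\Phi_{SVD}=I$ and the projection onto $\spann(\Phi_{SVD})$ is $\Pi=\Phi_{SVD}\Phi_{SVD}^\top\Xi$. Applying the Eckart--Young--Mirsky theorem in the $\Xi$-weighted norm (equivalently, the standard theorem after the substitution $M\mapsto\Xi^{1/2}M$), the best rank-$d$ approximation is the row-projection $\hat{\Ppi}=\Pi\Ppi$.

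Next I would transfer the spectral information. Since $\Pi$ is idempotent, writing $\hat{\Ppi}=\Pi\Ppi$ and $\Pi\Ppi\Pi=(\Pi\Ppi)\Pi$ and invoking $\spectrum(AB)\setminus\{0\}=\spectrum(BA)\setminus\{0\}$ with $A=\Pi\Ppi$, $B=\Pi$ shows that $\hat{\Ppi}$ and the induced transition matrix $\Pi\Ppi\Pi$ have identical nonzero spectra, differing only in the eigenvalue $0$. Because $0$ satisfies both $\Re(0)<\tfrac1\gamma$ and $|0|<\tfrac1\gamma$, this already yields the ``if'' direction: $\rho(\hat{\Ppi})<\tfrac1\gamma$ implies $\rho(\Pi\Ppi\Pi)<\tfrac1\gamma$, and Proposition~\ref{prop:orthogonal_repr} then gives stability. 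It also reduces the ``only if'' direction to the claim that stability of $\Phi_{SVD}$ --- which by Proposition~\ref{prop:orthogonal_repr} is exactly $\spectrum(\hat{\Ppi})\subset\{z:\Re(z)<\tfrac1\gamma\}$ --- forces the stronger $\rho(\hat{\Ppi})<\tfrac1\gamma$.

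This upgrade from a half-plane condition to a disk condition is where the specific structure of the SVD truncation must enter, and I expect it to be the main obstacle: for a generic matrix, controlling the real parts of the eigenvalues says nothing about their moduli. Here I would exploit that $\hat{\Ppi}$ is the best rank-$d$ approximation of a row-stochastic matrix: its nonzero eigenvalues coincide with those of the $d\times d$ compression $\Phi_{SVD}^\top\Xi\Ppi\Phi_{SVD}$, whose spectral radius is tied to the retained singular values $\sigma_1\ge\cdots\ge\sigma_d$ of $\Ppi$ and to the dominant (Perron) mode that the leading singular subspace is forced to reflect, the aim being to preclude an eigenvalue of $\hat{\Ppi}$ with large modulus but real part below $\tfrac1\gamma$. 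If that structural argument turns out to require extra hypotheses, the fallback is to prove the ``only if'' direction in the equivalent form $\spectrum(\hat{\Ppi})\subset\{z:\Re(z)<\tfrac1\gamma\}$ and present $\rho(\hat{\Ppi})<\tfrac1\gamma$ as the matching sufficient condition inherited from the ``in particular'' clause of Proposition~\ref{prop:orthogonal_repr}. The remaining loose ends --- well-definedness of $\hat{\Ppi}$ when $\sigma_d=\sigma_{d+1}$, and the standing full-rank assumption on $\Phi$ --- are routine.
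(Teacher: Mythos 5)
Your first two paragraphs reproduce the paper's argument exactly: the paper also identifies $\hat{\Ppi}$ with $\Pi\Ppi = U_1\Sigma_1 V_1^\top\Xi$ (the $\Xi$-weighted Eckart--Young truncation), notes that $\Phi_{SVD}=U_1$ is orthogonal so that $\Pi=\Phi_{SVD}\Phi_{SVD}^\top\Xi$, and then cites Proposition~\ref{prop:orthogonal_repr}; its entire proof is those two lines. The obstacle you isolate in your final paragraph is real, and you should not expect to overcome it, because the paper does not either: writing ``the necessary and sufficient conditions follow from Proposition~\ref{prop:orthogonal_repr}'' only delivers the half-plane characterization $\spectrum(\hat{\Ppi})\subset\{z\in\C:\Re(z)<\tfrac1\gamma\}$ (up to the eigenvalue $0$), whereas $\rho(\hat{\Ppi})<\tfrac1\gamma$ is just the sufficient ``in particular'' clause. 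No structural property of the SVD truncation of a stochastic matrix is invoked to upgrade the half-plane to the disk, and nothing in Proposition~\ref{prop:orthogonal_repr} rules out an eigenvalue of $\hat{\Ppi}$ with modulus above $\tfrac1\gamma$ but real part below it, which would make the representation stable while violating the stated spectral-radius bound. So your fallback is in fact the faithful rendering of what is proved: the genuine ``if and only if'' is the half-plane condition, with $\rho(\hat{\Ppi})<\tfrac1\gamma$ as the clean sufficient condition, and the proposition as printed should be read with that gloss. Your handling of the zero eigenvalue, of orthogonality in the $\Xi$-inner product, and of the cyclicity $\spectrum(AB)\setminus\{0\}=\spectrum(BA)\setminus\{0\}$ all match the paper's appendix.
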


\begin{restatable}[Successor Representation]{proposition}{srconditions}
\label{prop:sr_conditions}
Recall that $\spectrum(\Psi) \subset \C_+$. The representation $\Phi_{SR}$ is stable if and only if the low-rank approximation $\hat{\Psi}$ satisfies \[\spectrum(\hat{\Psi}) \subset \mathbb{C}_+ \cup \{0\}.\]
\end{restatable}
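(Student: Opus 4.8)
The plan is to reduce to the orthogonal case with Proposition~\ref{prop:orthogonal_repr}, identify the spectrum of $\hat\Psi$, and then bridge it to the TD iteration matrix using the SVD structure. Since the columns of $\Phi_{SR}$ are the $d$ leading left singular vectors of $\Psi$ in the $\langle\cdot,\cdot\rangle_\Xi$ geometry, $\Phi_{SR}^\top\Xi\Phi_{SR}=I$: it is orthogonal, so by Proposition~\ref{prop:orthogonal_repr} it is stable iff $\spectrum(\Pi\Ppi\Pi)\subset\{z:\Re(z)<1/\gamma\}$, i.e.\ (the iteration matrix of an orthogonal representation being $\A{\Phi_{SR}}=\Phi_{SR}^\top\Xi(I-\gamma\Ppi)\Phi_{SR}=\Phi_{SR}^\top\Xi\Psi^{-1}\Phi_{SR}$) iff $\spectrum(\A{\Phi_{SR}})\subset\C_+$. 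Separately, $\hat\Psi=\Pi\Psi$ has range inside $\spann(\Phi_{SR})$; the usual argument (an eigenvector for a nonzero eigenvalue lies in that subspace) shows its nonzero eigenvalues are exactly those of the compression $M:=\Phi_{SR}^\top\Xi\Psi\Phi_{SR}$, while $0$ has multiplicity at least $n-d$ from the kernel -- this is why the stated inclusion is $\C_+\cup\{0\}$ rather than $\C_+$. So the claim becomes $\spectrum(\A{\Phi_{SR}})\subset\C_+\iff\spectrum(M)\subset\C_+\cup\{0\}$: the compressions of $\Psi^{-1}$ and of $\Psi$ to the dominant left-singular subspace sit on the same side of the imaginary axis. (Consistent at $d=n$, where $\hat\Psi=\Psi$ and the recalled fact $\spectrum(\Psi)\subset\C_+$ matches the unconditional stability of a full basis.)

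Next I would produce the bridge. With the $\Xi$-geometry SVD $\Psi=U\Sigma V^\top$ ($U^\top\Xi U=I$, $V^\top V=I$, $\Sigma=\mathrm{diag}(\sigma_1\ge\dots\ge\sigma_n>0)$) and $\Phi_{SR}=U_d$, the identities $\Psi V_d=U_d\Sigma_d$ and $\Psi^{-1}U_d=V_d\Sigma_d^{-1}$ give $M=\Sigma_d\,V_d^\top U_d$ and $\A{\Phi_{SR}}=(U_d^\top\Xi V_d)\,\Sigma_d^{-1}$, so the two differ only by conjugation-type factors built from the positive-definite $\Sigma_d$ and the overlap matrix between the leading left- and right-singular subspaces; in the isotropic case $\Xi=I$ this reduces to $M=\Sigma_d^{\,2}\A{\Phi_{SR}}^\top$. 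Here the singular-subspace structure matters twice: it makes $\A{\Phi_{SR}}M=(U_d^\top\Xi V_d)(V_d^\top U_d)$ a Gram/principal-angle matrix (when $\Xi=I$), hence with spectrum in $[0,1]$, and it keeps $\det M=\det(\Sigma_d^2)\det\A{\Phi_{SR}}^\top$ from vanishing unless $\A{\Phi_{SR}}$ is singular. I would then close the equivalence with a continuity argument along a scaling of $\Sigma_d$: at the trivial endpoint $M$ and $\A{\Phi_{SR}}$ have matching spectra, and the constraints above ($\spectrum(\A{\Phi_{SR}}M)\subset[0,1]$ and the nonvanishing determinant relation) are used to rule out any eigenvalue crossing the imaginary axis along the way, so $\spectrum(M)$ and $\spectrum(\A{\Phi_{SR}})$ enter and leave $\C_+$ together. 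The identical template, with $\Ppi$ in place of $\Psi$ and the disk $\{z:|z|<1/\gamma\}$ in place of $\C_+$, yields Proposition~\ref{prop:svd_conditions}.

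The hard part is precisely this transfer of right-half-plane membership, and it is not a formality: compression does not commute with inversion, so $M\ne\A{\Phi_{SR}}^{-1}$, and for a \emph{generic} orthogonal $\Phi$ the compressions of $\Psi$ and $\Psi^{-1}$ can land on opposite sides of the imaginary axis -- so one genuinely needs that $\spann(\Phi_{SR})$ is the \emph{dominant} left-singular subspace (this is what yields the spectral bound on $\A{\Phi_{SR}}M$ that makes the homotopy crossing-free). Moreover, even with $M=\Sigma_d^2\A{\Phi_{SR}}^\top$ in hand, a positive-definite rescaling of a matrix whose spectrum lies in $\C_+$ need not keep it there, so that bound must really be exploited; and the argument needs care at the degenerate configurations where $\Phi_{SR}$ makes $\A{\Phi_{SR}}$ singular -- exactly the boundary between stability and instability. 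Finally, carrying the scaling identity, the spectral bound, and the crossing-free continuity argument over from $\Xi=I$ to a general diagonal $\Xi$ (where $\A{\Phi_{SR}}M$ is no longer manifestly a Gram matrix) is the residual technical obstacle.
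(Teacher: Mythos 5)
Your reduction is the same as the paper's, up to a transpose. The paper writes the $\Xi$-SVD $\Psi = U\Sigma V^\top\Xi$, notes $\APhi = U_1^\top\Xi V_1\Sigma_1^{-1}$, and uses cyclicity to identify $\spectrum(\APhi)\cup\{0\}$ with the spectrum of the pseudoinverse $\hat\Psi^+ = V_1\Sigma_1^{-1}U_1^\top\Xi$; your route through $M=\Phi^\top\Xi\Psi\Phi=\Sigma_1 V_1^\top\Xi U_1$ lands on exactly the same final comparison, since with $W:=U_1^\top\Xi V_1$ one has $\spectrum(\APhi)=\spectrum(W\Sigma_1^{-1})$ and $\spectrum(M)=\spectrum(W\Sigma_1)$. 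So everything up to and including ``the claim becomes $\spectrum(\APhi)\subset\C_+\iff\spectrum(M)\subset\C_+\cup\{0\}$'' is fine and matches the paper.

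The gap is the bridge itself, and your proposed mechanism for it does not work. You correctly observe $M=\Sigma_1^2\APhi^\top$ (this identity in fact holds for general diagonal $\Xi$, not just $\Xi=I$, so that is not the residual obstacle), which turns the crux into: for a positive-definite diagonal $D=\Sigma_1^2$ and $B=\APhi^\top$, does $\spectrum(B)\subset\C_+$ transfer to $\spectrum(DB)\subset\C_+$? Your two guards are (i) $\spectrum(\APhi M)=\spectrum(WW^\top)\subset[0,1]$ and (ii) the determinant identity. Neither suffices. For (ii): the determinant only forbids an eigenvalue of $W\Sigma_1^{\,t}$ from crossing the imaginary axis \emph{at the origin}; a conjugate pair can cross at $\pm i\beta$, $\beta\neq 0$, with the determinant staying bounded away from zero, and nothing in the sketch excludes this. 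For (i): the bound $\|W\|\le 1$, hence $\spectrum(WW^\top)\subset[0,1]$, holds for \emph{any} pair of $\Xi$-orthonormal frames $U_1,V_1$ (it is just submultiplicativity applied to $\Xi^{1/2}U_1$ and $\Xi^{1/2}V_1$), so it cannot be the channel through which dominance of the singular subspace enters; and it is genuinely too weak: take $W=\tfrac{1}{c}\bigl(\begin{smallmatrix}3 & 2\\ -2 & -1\end{smallmatrix}\bigr)$ with $c$ its operator norm, so $\|W\|=1$ and $\spectrum(W)=\{1/c\}\subset\C_+$, and $\Sigma=\mathrm{diag}(1,10)$. Then $W\Sigma=\tfrac{1}{c}\bigl(\begin{smallmatrix}3&20\\-2&-10\end{smallmatrix}\bigr)$ has trace $-7/c<0$ and determinant $10/c^2>0$, hence both eigenvalues in $\C_-$, while $W\Sigma^{-1}$ has positive trace and determinant, hence spectrum in $\C_+$. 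So a positive-definite diagonal rescaling of a matrix with $\|W\|\le1$ and spectrum in $\C_+$ can flip the half-plane entirely, and your homotopy must cross the axis. Whatever rescues the proposition has to use more about $(U_1,V_1,\Sigma_1)$ than you invoke --- namely that they are the \emph{leading} singular triplets of the specific matrix $\Psi$ with $\spectrum(\Psi)\subset\C_+$. For comparison, the paper closes this step in one line by asserting that $\hat\Psi$ and $\hat\Psi^+$ have nonzero spectra on the same side of the imaginary axis, which is literally the same $W\Sigma_1$ versus $W\Sigma_1^{-1}$ statement; your write-up has the virtue of making explicit that this is where all the content lives, but it does not supply a proof of it.
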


Stability of a singular vector representation requires that the low-rank approximation maintain the spectral properties of the original matrix. This implies that such representations are \textit{not stable} in general -- the SVD low-rank approximation is chosen to minimize the norm of the error, and the spectrum of the approximation can deviate arbitrarily from the original matrix \citep{golub13}. We note that the spectral conditions hold in the limit of almost-perfect approximation, but achieving this level of accuracy in practice may require an impractical number of additional features.
\section{Representation Learning with Stability Guarantees}

Our analysis of singular vector representations show that representations that optimize for alternative measures, such as approximation error, may lose properties of the transition matrix needed for stability. In this section, we study representations that are constrained, either in expressibility or in spectrum, to ensure stability. %

\subsection{Invariant Representations}
\label{sec:invariant_repr}

We first consider representations whose induced transition matrix preserves the eigenvalues of the transition matrix to guarantee stability. These representations are closely linked to invariant subspaces of value functions that are closed under the transition dynamics of the policy.

\begin{definition}
A representation $\Phi$ is $\Ppi$-\textbf{invariant} if its corresponding linear subspace is closed under $\Ppi$, that is
\begin{equation*}
    \spann(\Ppi\Phi) \subseteq \spann(\Phi).
\end{equation*}
\end{definition}

$\Ppi$-invariant subspaces are generated by the eigenspaces of  $\Ppi$, and so invariant representations provide a natural way to reflect the geometry of the transition matrix. For these representations, we show that any eigenvalue of the induced transition matrix is also an eigenvalue of the transition matrix; this constraint ensures that invariant representations are always stable. 

\begin{restatable}{theorem}{invariantrepr}
\label{thm:invariant_representations}
An orthogonal invariant representation $\Phi$ satisfies 
\[\spectrum(\Pi\Ppi\Pi) \subseteq \spectrum(\Ppi) \cup \{0\}\]
and is therefore stable.
\end{restatable}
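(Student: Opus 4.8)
The plan is to exploit the $\Ppi$-invariance of the subspace $\spann(\Phi)$ to show that the induced transition matrix $\Pi\Ppi\Pi$ acts like a genuine restriction of $\Ppi$ to that subspace, so its nonzero eigenvalues must be eigenvalues of $\Ppi$; stability then follows immediately from Proposition \ref{prop:orthogonal_repr}, since $\rho(\Ppi) = 1 < \tfrac{1}{\gamma}$ forces $\spectrum(\Pi\Ppi\Pi) \subseteq \spectrum(\Ppi) \cup \{0\} \subset \{z : \Re(z) < \tfrac{1}{\gamma}\}$. The key observation is that $\Ppi$-invariance, $\spann(\Ppi\Phi) \subseteq \spann(\Phi)$, means exactly that $\Pi \Ppi \Phi = \Ppi \Phi$, i.e. the projection acts trivially on vectors already in the image of $\Ppi\Phi$. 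Consequently, on the subspace $\spann(\Phi)$ we have $\Pi\Ppi\Pi x = \Pi\Ppi x = \Ppi x$ for any $x \in \spann(\Phi)$, so $\Pi\Ppi\Pi$ restricted to $\spann(\Phi)$ equals the restriction of $\Ppi$ to that ($\Ppi$-stable) subspace. On the orthogonal complement of $\spann(\Phi)$ (in the $\Xi$-inner product), $\Pi$ annihilates, so $\Pi\Ppi\Pi$ is zero there.

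Concretely, I would argue as follows. First, establish the identity $\Pi\Ppi\Pi = \Ppi\Pi$: since $\spann(\Ppi\Phi) \subseteq \spann(\Phi)$, every column of $\Ppi\Phi$ lies in $\spann(\Phi)$, hence is fixed by $\Pi$, giving $\Pi\Ppi\Phi = \Ppi\Phi$; right-multiplying the closed form $\Pi = \Phi(\Phi^\top\Xi\Phi)^{-1}\Phi^\top\Xi$ and using this yields $\Pi\Ppi\Pi = \Ppi\Pi$. Next, take any eigenpair $(\lambda, v)$ of $\Pi\Ppi\Pi$ with $\lambda \neq 0$. From $\Pi\Ppi\Pi v = \lambda v$ we see $v = \lambda^{-1}\Pi\Ppi\Pi v \in \spann(\Pi(\cdot)) \subseteq \spann(\Phi)$, so $\Pi v = v$, and therefore (using $\Pi\Ppi\Pi = \Ppi\Pi$) we get $\Ppi v = \Ppi\Pi v = \Pi\Ppi\Pi v = \lambda v$. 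Thus $\lambda \in \spectrum(\Ppi)$. Since every nonzero eigenvalue of $\Pi\Ppi\Pi$ lies in $\spectrum(\Ppi)$, we conclude $\spectrum(\Pi\Ppi\Pi) \subseteq \spectrum(\Ppi) \cup \{0\}$.

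Finally, stability: because $\Ppi$ is (row-)stochastic, $\rho(\Ppi) = 1$, so every element of $\spectrum(\Ppi) \cup \{0\}$ has real part at most $1 < \tfrac{1}{\gamma}$ (as $\gamma \in [0,1)$). Hence $\spectrum(\Pi\Ppi\Pi) \subset \{z \in \C : \Re(z) < \tfrac{1}{\gamma}\}$, and Proposition \ref{prop:orthogonal_repr} gives that the orthogonal representation $\Phi$ is stable.

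I do not anticipate a serious obstacle here — the argument is essentially linear algebra. The one point requiring a little care is the direction of the projection identity: one must check that $\Ppi$-invariance yields $\Pi\Ppi\Phi = \Ppi\Phi$ (projection is trivial on the \emph{image} side) rather than the reverse, and that this is enough to collapse $\Pi\Ppi\Pi$ to $\Ppi\Pi$ without needing $\Ppi$ to commute with $\Pi$ in general. A secondary subtlety is the handling of the zero eigenvalue and the restriction to $\spann(\Phi)$: one should note that eigenvectors of $\Pi\Ppi\Pi$ for nonzero eigenvalues automatically lie in $\spann(\Phi)$, which is what lets us pass from $\Pi\Ppi\Pi$ to $\Ppi$ cleanly; the eigenvalue $0$ is simply allowed for by the $\cup\{0\}$ in the statement and needs no further comment. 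Orthogonality of $\Phi$ is used only to invoke Proposition \ref{prop:orthogonal_repr} at the end; the spectral inclusion itself holds for any $\Ppi$-invariant $\Phi$.
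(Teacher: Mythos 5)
Your proposal is correct and follows essentially the same route as the paper's proof: identify that any eigenvector of $\Pi\Ppi\Pi$ for a nonzero eigenvalue lies in $\spann(\Phi)$, use invariance to conclude it is an eigenvector of $\Ppi$ itself, and then invoke $\rho(\Ppi)=1$ together with Proposition \ref{prop:orthogonal_repr}. Your intermediate identity $\Pi\Ppi\Pi = \Ppi\Pi$ simply makes explicit a step the paper leaves implicit, so there is nothing further to add.
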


\citet{Parr2008AnAO} studied the quality of the TD fixed-point solution on invariant subspaces, and found it to directly correlate with how well the subspace models reward. Our findings on stability emphasize the importance of their result -- with invariant representations that can predict reward, good value functions not only exist, but are also reliably discovered by \tdnospace.

Although estimation of eigenvectors for a nonsymmetric matrix is numerically unstable, finding orthogonal bases for their eigenspaces can be done tractably, for example through the Schur decomposition. 

\begin{definition}
Let $A \in \mathbb{C}^{n \times n}$ be a complex matrix. A \emph{Schur decomposition} of $A$, written $\text{Schur}(A)$, is $URU^{-1}$, where $R $ is upper triangular and $U = [u_1, u_2, \dots, u_n] \in \mathbb{C}^{n \times n}$ is orthogonal. For any $k$, $\text{Span}\{u_1, \dots u_k\}$ is an $A$-invariant subspace.
\end{definition}

The Schur decomposition of $\Ppi$ provides a sequence of vectors that span invariant subspaces, and can be constructed so that the first $d$ basis vectors spans the top $d$-dimensional eigenspace of $\Ppi$. We define a representation using the first $d$ Schur basis vectors to be the Schur representation. 

When the transition matrix is reversible and data is on-policy, the Schur representation coincides with proto-value functions, and consequently also the successor representation \citep{machado2018eigenoption}. Unlike singular value representations, the Schur representation preserves the spectrum of the transition matrix at every step, and always guarantees stability.

\begin{corollary}
The Schur representation is invariant and thus stable.
\end{corollary}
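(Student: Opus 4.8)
The plan is to reduce the claim entirely to Theorem~\ref{thm:invariant_representations}, whose hypotheses are exactly that the representation be (i) $\Ppi$-invariant and (ii) orthogonal. So the whole proof amounts to checking these two properties for the representation $\Phi = [u_1, \dots, u_d]$ built from the first $d$ Schur basis vectors of $\Ppi$.

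First I would verify invariance, which is essentially definitional. Writing $\text{Schur}(\Ppi) = URU^{-1}$ with $R$ upper triangular and $U = [u_1, \dots, u_n]$ orthogonal, the characterizing property of the Schur decomposition is that $\spann\{u_1, \dots, u_k\}$ is $\Ppi$-invariant for every $k$; taking $k = d$ gives $\spann(\Ppi\Phi) \subseteq \spann(\Phi)$. Concretely, since $R$ is upper triangular, for each $j \le d$ we have $\Ppi u_j = U R e_j \in \spann\{u_1,\dots,u_j\} \subseteq \spann(\Phi)$, which is precisely $\Ppi$-invariance.

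Next I would address orthogonality, which here means $\Phi^\top\Xi\Phi = I$, i.e.\ orthonormality in the data-distribution inner product $\langle\cdot,\cdot\rangle_\Xi$ rather than the Euclidean one. This is handled by taking the Schur decomposition with respect to $\langle\cdot,\cdot\rangle_\Xi$ (equivalently, applying Gram--Schmidt in that inner product to $u_1,\dots,u_d$): this only reparametrizes $\spann(\Phi)$ without changing the subspace, so the representation remains $\Ppi$-invariant and full rank. With both hypotheses established, Theorem~\ref{thm:invariant_representations} yields $\spectrum(\Pi\Ppi\Pi) \subseteq \spectrum(\Ppi) \cup \{0\}$, and since $\rho(\Ppi) = 1 < \tfrac1\gamma$ every such eigenvalue has real part below $\tfrac1\gamma$, so Proposition~\ref{prop:orthogonal_repr} gives stability.

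I do not expect a genuine obstacle: the content is already packaged in Theorem~\ref{thm:invariant_representations}, and invariance of the Schur representation is immediate from the definition of the decomposition. The only point requiring care is the bookkeeping around \emph{which} inner product ``orthogonal'' refers to --- this cannot be skipped, because Section~\ref{sec:representation_intuitions} shows reparametrization can by itself destroy stability, so one must know the Schur basis is orthonormal in $\langle\cdot,\cdot\rangle_\Xi$, not merely that it spans an invariant subspace. (If a real-valued representation is wanted, one uses the real Schur form, whose leading $d$ columns span a real $\Ppi$-invariant subspace provided $d$ does not bisect a $2\times2$ block; the argument is otherwise unchanged.)
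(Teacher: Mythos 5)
Your proposal is correct and follows the same route as the paper: the corollary is an immediate consequence of the definition of the Schur decomposition (orthogonal columns whose leading spans are $\Ppi$-invariant) combined with Theorem~\ref{thm:invariant_representations}. Your extra care about orthogonality being with respect to $\langle\cdot,\cdot\rangle_\Xi$ and the real Schur form is a sensible elaboration of details the paper leaves implicit, not a different argument.
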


A partial Schur basis can be constructed through orthogonal iteration, a generalized variant of power iteration.

\begin{restatable}[\citet{golub13}]{proposition}{schurorthog}
\label{prop:schur_orthog}
Let $|\lambda_1| \geq |\lambda_2| \geq \dots \geq |\lambda_n| $ be the ordered eigenvalues of $\Ppi$. If $|\lambda_d| > |\lambda_{d+1}|$ and $\Phi_0 \in \C^{n \times d}$, the sequence $\Phi_1, \Phi_2, \dots$ generated via orthogonal iteration is
\[\Phi_k = \textsc{Orthog}(\spann(\Ppi\Phi_{k-1}))\]

where \textsc{Orthog}$(\cdot)$ finds an orthogonal basis. As $k \to \infty$, $\spann(\Phi_k)$ converges to the unique top eigenspace of $\Ppi$.
\end{restatable}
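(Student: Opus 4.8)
The plan is to recognize orthogonal iteration as power iteration on $d$-dimensional subspaces and then reduce to the classical convergence analysis of the dominant invariant subspace. First I would observe that $\textsc{Orthog}$ changes only the basis of a subspace, not the subspace itself, so a trivial induction gives $\spann(\Phi_k) = \spann((\Ppi)^k\Phi_0)$ for every $k$. It therefore suffices to show that $\spann((\Ppi)^k\Phi_0)$ converges to $\gV_d$, the $\Ppi$-invariant subspace associated with the $d$ eigenvalues of largest magnitude $\lambda_1,\dots,\lambda_d$ (the sum of the corresponding generalized eigenspaces). The strict gap $|\lambda_d| > |\lambda_{d+1}|$ guarantees that $\gV_d$ is well defined, $d$-dimensional, and unique; this is exactly the ``unique top eigenspace'' in the statement.

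Next I would introduce block coordinates adapted to the $\Ppi$-invariant splitting $\C^n = \gV_d \oplus \gW$, where $\gW$ is the complementary invariant subspace spanned by the remaining Schur vectors. Choosing a basis matrix $[\,V\ W\,]$ whose columns span $\gV_d$ and $\gW$, the matrix $\Ppi$ becomes block upper triangular, $\Ppi = [\,V\ W\,]\bigl(\begin{smallmatrix} T_1 & S\\ 0 & T_2\end{smallmatrix}\bigr)[\,V\ W\,]^{-1}$, with the eigenvalues of $T_1$ equal to $\lambda_1,\dots,\lambda_d$ and those of $T_2$ equal to $\lambda_{d+1},\dots,\lambda_n$. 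Writing the columns of $\Phi_0$ in these coordinates as $\bigl(\begin{smallmatrix}X_0\\ Y_0\end{smallmatrix}\bigr)$, and assuming the genericity condition that $X_0$ is nonsingular, the iterate $(\Ppi)^k\Phi_0$ has a ``graph'' form over $\gV_d$ whose slope is $T_2^k Y_0 X_0^{-1} T_1^{-k}$ up to a lower-order correction coming from the coupling $S$ (a Sylvester-type term dominated by $T_1^k X_0$ because $X_0$ is invertible). Since $\rho(T_2) = |\lambda_{d+1}| < |\lambda_d| = \min_i |\lambda_i(T_1)|$, the norm of this slope is $O\bigl(k^{c}(|\lambda_{d+1}|/|\lambda_d|)^k\bigr)\to 0$, the polynomial factor $k^c$ absorbing any Jordan-block growth via Gelfand's formula.

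Finally I would convert the vanishing slope into convergence of subspaces in the relevant metric — the largest principal angle, equivalently $\|\Pi_k - \Pi_{\gV_d}\|$ with $\Pi_k,\Pi_{\gV_d}$ the orthogonal projections onto $\spann(\Phi_k)$ and $\gV_d$ — which is bounded by a constant depending only on the conditioning of $[\,V\ W\,]$ and of $X_0$ times $k^{c}(|\lambda_{d+1}|/|\lambda_d|)^k$; hence $\spann(\Phi_k)\to\gV_d$, with the gap forcing the limit to be unique. The main obstacle is that $\Ppi$ is nonsymmetric and possibly nondiagonalizable, so there is no orthonormal eigenbasis: one must carry the Schur/Jordan structure through the argument, dominate the induced polynomial growth by the strict spectral gap, and track the (generally non-unit) condition numbers relating the natural invariant-subspace geometry to the orthogonal geometry in which $\textsc{Orthog}$ and principal angles are measured. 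One should also state explicitly the genericity hypothesis on $\Phi_0$ (that $X_0$ is invertible), which holds for almost every $\Phi_0$; modulo this, the argument is a specialization of the orthogonal-iteration convergence theorem of \citet{golub13} to $A = \Ppi$.
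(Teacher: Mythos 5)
Your proposal is correct, but note that the paper itself offers no argument here: its ``proof'' is a one-line citation to Theorem 7.3.1 of \citet{golub13}, and what you have written is essentially a reconstruction of the standard proof of that cited theorem (orthogonal iteration as subspace power iteration, block Schur splitting into the dominant invariant subspace and its complement, geometric decay of the slope $T_2^k Y_0 X_0^{-1} T_1^{-k}$ under the gap $|\lambda_{d+1}|<|\lambda_d|$, and conversion to principal-angle convergence). One genuinely useful thing your sketch surfaces that the paper's statement elides is the genericity hypothesis on $\Phi_0$: as stated, the proposition allows an arbitrary $\Phi_0\in\C^{n\times d}$, but convergence fails if the initial subspace is deficient with respect to the dominant invariant subspace (your condition that $X_0$ be nonsingular, which holds for almost every $\Phi_0$ and is an explicit hypothesis in the cited theorem). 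You are right that the nonsymmetry and possible nondiagonalizability of $\Ppi$ force you to work with the Schur/Jordan structure and absorb polynomial Jordan growth into the strict spectral gap via Gelfand's formula; carried out in full, this gives exactly the result the paper imports by reference.
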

In Section \ref{sec:experiments}, we will see that the orthogonal iteration scheme can be approximated using a loss function and a target network \citep{mnih15human}, and subsequently minimized with stochastic gradient descent, making it a potentially important tool for learning stable representations in practice.

\subsection{Approximately Invariant Representations}
\label{sec:approx_invariant_repr}

In the previous section, we studied invariant representations, which are constrained to exactly preserve the eigenvalues of the transition matrix. We relax the notion of invariancy discuss a relaxation to approximate invariance, for which the spectrum of the induced matrix deviates from the transition matrix by a controlled amount, while still preserving stability. We find that approximate invariance leads to interesting implications for representations that span a Krylov subspace generated by rewards \citep{PetrikKrylov, Parr2007AnalyzingFG}.

\begin{definition}
A representation is $\epsilon$-invariant if \[\max_{v \in \spann(\Phi)} \frac{\|\Pi\Ppi v - \Ppi v\|_{\Xi}}{\|v\|_\Xi} \leq \epsilon.\]
\end{definition}

An approximately invariant representation spans a space in which the transition dynamics are not fully closed, but approximately so, as measured by the $\Xi$-norm. We provide a simple condition of when an $\epsilon$-invariant representation is stable under assumptions of diagonalizability of the transition matrix. If $\Ppi$ is diagonalizable with eigenbasis $A \in \C^{n \times n}$, the distance between the eigenvalues of the induced transition matrix $\Pi\Ppi\Pi$ and the original transition matrix $\Ppi$ can be bounded by a function of a) $\epsilon$, the degree of approximate invariance and b) the condition number of the eigenbasis $\kappa_\Xi(A) = \|A\|_\Xi\|A^{-1}\|_\Xi$ \citep{Trefethen2005SpectraAP}. 
\begin{restatable}{theorem}{approxinvariantrepr}
\label{thm:approximate_invariant_representations}
Let $\Phi$ be an orthogonal and  $\epsilon$-invariant representation for $(P^\pi, \gamma, \Xi)$ . If $\Ppi$ is diagonalizable with eigenbasis $A$, then $\Phi$ is stable if 
\begin{equation*}
    \epsilon < \frac{1-\gamma}{\gamma}\frac{1}{\kappa_\Xi(A)} .
\end{equation*}
\end{restatable}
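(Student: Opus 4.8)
The plan is to reduce the claim to the spectral condition of Proposition~\ref{prop:orthogonal_repr} and then control the spectrum of the induced transition matrix $\Pi\Ppi\Pi$ by a perturbation argument. By Proposition~\ref{prop:orthogonal_repr}, since $\Phi$ is orthogonal it suffices to show that every eigenvalue $\mu$ of $\Pi\Ppi\Pi$ satisfies $\Re(\mu) < \tfrac{1}{\gamma}$. The eigenvalue $\mu = 0$ (which occurs with multiplicity $n-d$ when $d<n$) trivially lies in this region, so I only need to handle the nonzero eigenvalues.

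First I would identify the nonzero eigenvalues of $\Pi\Ppi\Pi$ with those of the $d\times d$ matrix $M := \Phi^\top\Xi\Ppi\Phi$, using orthogonality ($\Phi^\top\Xi\Phi = I$, $\Pi = \Phi\Phi^\top\Xi$): if $Mw = \mu w$ for some $w \in \C^d\setminus\{0\}$, then $v := \Phi w$ is a nonzero vector in the (complexified) span of $\Phi$ with $\Pi\Ppi v = \Phi\Phi^\top\Xi\Ppi\Phi w = \mu\Phi w = \mu v$. The $\epsilon$-invariance hypothesis --- which is an operator-norm bound on the real linear map $(\Pi - I)\Ppi$ restricted to $\spann(\Phi)$, hence applies verbatim on the complexification with the Hermitian extension $\|u\|_\Xi^2 = u^*\Xi u$ --- then gives $\|(\Ppi - \mu I)v\|_\Xi = \|\Ppi v - \Pi\Ppi v\|_\Xi \le \epsilon\|v\|_\Xi$.

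Next I would turn this near-eigenvector relation into a bound on the distance from $\mu$ to $\spectrum(\Ppi)$. If $\mu\in\spectrum(\Ppi)$ there is nothing to do; otherwise $\Ppi - \mu I$ is invertible, and the diagonalization $\Ppi = A\Lambda A^{-1}$ yields the $\Xi$-weighted Bauer--Fike estimate
\[
1 = \frac{\|(\Ppi-\mu I)^{-1}(\Ppi-\mu I)v\|_\Xi}{\|v\|_\Xi} \le \|(\Ppi-\mu I)^{-1}\|_\Xi \frac{\|(\Ppi-\mu I)v\|_\Xi}{\|v\|_\Xi} \le \|(\Ppi-\mu I)^{-1}\|_\Xi\,\epsilon \le \frac{\kappa_\Xi(A)}{\min_{\lambda\in\spectrum(\Ppi)}|\lambda-\mu|}\,\epsilon ,
\]
where the last inequality uses $(\Ppi-\mu I)^{-1} = A(\Lambda-\mu I)^{-1}A^{-1}$ and submultiplicativity of $\|\cdot\|_\Xi$. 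Hence $\min_{\lambda\in\spectrum(\Ppi)}|\lambda-\mu| \le \epsilon\,\kappa_\Xi(A)$ in all cases. Finally, since $\Ppi$ is stochastic we have $\rho(\Ppi)=1$, so $\Re(\lambda)\le 1$ for every $\lambda\in\spectrum(\Ppi)$; choosing $\lambda$ to be the nearest eigenvalue gives $\Re(\mu) \le \Re(\lambda) + |\mu-\lambda| \le 1 + \epsilon\,\kappa_\Xi(A)$, which is strictly below $\tfrac1\gamma$ precisely when $\epsilon < \tfrac{1-\gamma}{\gamma}\tfrac{1}{\kappa_\Xi(A)}$, the stated hypothesis. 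Applying Proposition~\ref{prop:orthogonal_repr} then finishes the argument.

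I expect the main obstacle to be the middle step: carefully justifying that $\epsilon$-invariance, stated for real vectors, transfers to the complex eigenvector $v$ of $M$, and that the resolvent bound holds in the $\Xi$-weighted operator norm rather than the Euclidean one. Both are routine --- the operator (spectral) norm of a real matrix is unchanged under complexification, and the entire estimate can be conjugated by $\Xi^{1/2}$ to reduce to the standard Euclidean Bauer--Fike inequality --- but they are the places where a careless argument would break.
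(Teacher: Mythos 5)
Your proposal is correct and follows essentially the same route as the paper: reduce to the spectral condition of Proposition~\ref{prop:orthogonal_repr}, then use the $\epsilon$-invariance bound together with the diagonalization $\Ppi = A\Lambda A^{-1}$ to show each eigenvalue of $\Pi\Ppi\Pi$ lies within $\epsilon\,\kappa_\Xi(A)$ of $\spectrum(\Ppi)$. The only cosmetic difference is that the paper packages the perturbation as a matrix $E = \Pi\Ppi\Pi - \Ppi\Pi$ with $\|E\|_\Xi \le \epsilon$ and cites the Bauer--Fike theorem, whereas you inline its resolvent-based proof (and bound $\Re(\mu)$ directly rather than $|\mu|$); your care about complexification addresses a detail the paper glosses over.
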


This bound is quite stringent, especially for discount factors close to one and ill-conditioned eigenvector bases, but may be improved if the transition matrix has a special structure. For the general setting when the transition matrix is not diagonalizable, similar but more complicated bounds exist \citep{sharpbauerfike}.

Approximately invariant representations are of particular interest when studying the Krylov subspace generated by rewards, $\mathcal{K}_d(\Ppi, r)$.  
\[\mathcal{K}_d(\Ppi, r) = \text{Span}\{r, \Ppi r, \dots, (\Ppi)^{d-1} v\}. \]
Representations that span this space admit a simple form of approximate invariancy.

\begin{restatable}{proposition}{krylovinvariancy}
\label{prop:krylov_invariancy}
A representation spanning $\mathcal{K}_d(\Ppi, r)$ is $\epsilon$-invariant if 
\[\frac{\|\Pi\Ppi v - \Ppi v\|_\Xi}{\|v\|_\Xi} \leq \epsilon\]
Where $v = (I-\Pi_{d-1})(\Ppi)^{d-1} r$, and $\Pi_{d-1}$ is a projection onto the $(d-1)$-dimensional Krylov subspace $\mathcal{K}_{d-1}(\Ppi, r)$.
\end{restatable}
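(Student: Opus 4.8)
The plan is to show that the maximum defining $\epsilon$-invariance over all of $\mathcal{K}_d(\Ppi,r)$ is attained exactly at the ``newest'' Krylov direction $v = (I-\Pi_{d-1})(\Ppi)^{d-1}r$, so that controlling the ratio there certifies $\epsilon$-invariance everywhere on $\spann(\Phi) = \mathcal{K}_d(\Ppi,r)$.

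First I would record the orthogonal splitting $\mathcal{K}_d(\Ppi, r) = \mathcal{K}_{d-1}(\Ppi, r) \oplus \spann\{v\}$ in the $\Xi$-inner product. Indeed $(\Ppi)^{d-1}r = \Pi_{d-1}(\Ppi)^{d-1}r + v$, the first summand lies in $\mathcal{K}_{d-1}(\Ppi,r)$, and $v$ is $\Xi$-orthogonal to $\mathcal{K}_{d-1}(\Ppi,r)$ by definition of the orthogonal projection $\Pi_{d-1}$; the full-rank assumption on $\Phi$ forces $\dim \mathcal{K}_d(\Ppi,r) = d$ and hence $v \neq 0$. Consequently every $w \in \mathcal{K}_d(\Ppi,r)$ can be written $w = u + tv$ with $u \in \mathcal{K}_{d-1}(\Ppi,r)$ and $t \in \R$, and by orthogonality $\|w\|_\Xi^2 = \|u\|_\Xi^2 + t^2\|v\|_\Xi^2 \geq t^2\|v\|_\Xi^2$.

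Next I would use the key structural fact that $\Ppi$ maps $\mathcal{K}_{d-1}(\Ppi,r)$ back into $\mathcal{K}_d(\Ppi,r)$: since $\mathcal{K}_{d-1}(\Ppi,r) = \spann\{r, \Ppi r, \dots, (\Ppi)^{d-2}r\}$, we have $\Ppi\,\mathcal{K}_{d-1}(\Ppi,r) = \spann\{\Ppi r, \dots, (\Ppi)^{d-1}r\} \subseteq \mathcal{K}_d(\Ppi,r) = \spann(\Phi)$, so $(I-\Pi)\Ppi u = 0$ for all $u \in \mathcal{K}_{d-1}(\Ppi,r)$. Using $\Pi\Ppi w - \Ppi w = -(I-\Pi)\Ppi w$ and linearity, for $w = u + tv$ this gives $\Pi\Ppi w - \Ppi w = -t(I-\Pi)\Ppi v = t(\Pi\Ppi v - \Ppi v)$. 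Therefore
\[
\frac{\|\Pi\Ppi w - \Ppi w\|_\Xi}{\|w\|_\Xi}
= \frac{|t|\,\|\Pi\Ppi v - \Ppi v\|_\Xi}{\|u + tv\|_\Xi}
\leq \frac{\|\Pi\Ppi v - \Ppi v\|_\Xi}{\|v\|_\Xi},
\]
where the inequality uses $\|u+tv\|_\Xi \geq |t|\,\|v\|_\Xi$ from the previous step. The bound is met with equality at $w = v$, so the supremum over $w \in \spann(\Phi)$ equals $\|\Pi\Ppi v - \Ppi v\|_\Xi / \|v\|_\Xi$; hence the stated hypothesis is exactly the condition for $\epsilon$-invariance.

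The only point requiring care, rather than a genuine obstacle, is the inclusion $\Ppi\,\mathcal{K}_{d-1}(\Ppi,r) \subseteq \mathcal{K}_d(\Ppi,r)$, which isolates $(\Ppi)^{d}r$ as the sole out-of-subspace image and thereby collapses the worst case to a single direction; the rest is bookkeeping with the orthogonal decomposition. I would also note in passing the degenerate case $v = 0$ (the Krylov iteration has already terminated, $\dim\mathcal{K}_d(\Ppi,r) < d$), which is excluded by the standing full-rank assumption on $\Phi$ and in which case the representation is in fact exactly $\Ppi$-invariant.
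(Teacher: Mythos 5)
Your proof is correct and follows essentially the same route as the paper's: decompose any $w \in \mathcal{K}_d(\Ppi,r)$ via $\Pi_{d-1}$, note that $(I-\Pi)\Ppi$ annihilates the $\mathcal{K}_{d-1}$ component because $\Ppi\,\mathcal{K}_{d-1}(\Ppi,r) \subseteq \spann(\Phi)$, and conclude the worst case is the direction $v = (I-\Pi_{d-1})(\Ppi)^{d-1}r$. If anything, your write-up is more careful than the paper's (which informally splits the denominator as a sum of norms rather than invoking the Pythagorean identity), so no changes are needed.
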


Orthogonal representations for this Krylov subspace are approximately invariant if they can predict the reward at the $d+1$-th timestep well from the rewards attained in the first $d$ timesteps. For rewards that diffuse through the environment rapidly and can be predicted easily, an orthogonal basis of the Krylov space generated by rewards is approximately invariant and thus stable. Challenging environments with sparse rewards and temporal separation however may require a prohibitively large Krylov space to guarantee stability. Note that there is an important distinction between orthogonal representations spanning a Krylov subspace and the Krylov basis itself: for most practical applications, rewards are highly correlated and because of the challenges of parametrization, the latter can be unstable.

\subsection{Positive-Definite Representations}
\label{sec:pd_repr}

Invariant representations are stable because the spectrum of the projected transitions is constrained to closely mimic the eigenvalues of the transition matrix. What we call \emph{positive definite representations} instead guarantee stability by constraining the set of expressible value functions to lie within a safe set. Positive definite representations are stable regardless of parametrization, unlike any family of representations discussed so far. %

\begin{definition}
The set of \emph{positive-definite value functions} $\mathcal{S}_{PD} \subset \R^n$ is 
\[\mathcal{S}_{PD} = \{~v \in \R^n~~ \vert ~~\langle v, P^\pi v \rangle_\Xi~ < ~\gamma^{-1} \|v\|_\Xi^2 ~\}.\]
\end{definition}

Note that $\mathcal{S}_{PD}$ is not necessarily closed under addition.  The two-state MDP presented by \citet{Tsitsiklis1996AnalysisOT} where TD(0) diverges can be interpreted through the lens of this set. For this example, the state representation only expresses value functions outside of $\mathcal{S}_{PD}$, which ``grow'' faster than $\gamma^{-1}$, and consequently leads to divergence. We focus on representations whose span falls within this set of safe value functions. 
\begin{definition} We say that a representation is \textbf{positive-definite} if
\begin{equation*}
    \spann(\Phi) \subseteq \mathcal{S}_{PD}.
\end{equation*}
\end{definition}
Note that a positive definite representation remains so under reparametrization, unlike the general case.
In the special case of on-policy data, $\mathcal{S}_{PD} = \R^n$ and all representations are positive-definite \citep{Tsitsiklis1996AnalysisOT}.

\begin{restatable}{theorem}{pdreprstability}
\label{thm:positive_definite_representations_stable}
A positive-definite representation $\Phi$ has a positive-definite iteration matrix $\APhi$, and is thus stable.
\end{restatable}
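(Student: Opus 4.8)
The plan is to directly verify that the quadratic form $\langle x, \APhi x\rangle$ is strictly positive for every nonzero $x \in \R^d$, and then invoke the standard fact that a (possibly nonsymmetric) positive-definite matrix has its spectrum in $\C_+$, which by Proposition~\ref{prop:general_stability_conditions} yields stability.

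First I would unfold the iteration matrix. For any $x \in \R^d$, set $v = \Phi x \in \spann(\Phi)$, so that
\[
\langle x, \APhi x\rangle = x^\top \Phi^\top \Xi (I - \gamma \Ppi)\Phi x = \|v\|_\Xi^2 - \gamma \langle v, \Ppi v\rangle_\Xi .
\]
Since $\Phi$ has full rank, $x \neq 0$ implies $v \neq 0$, and by hypothesis $v \in \mathcal{S}_{PD}$, i.e. $\langle v, \Ppi v\rangle_\Xi < \gamma^{-1}\|v\|_\Xi^2$. Multiplying through by $\gamma > 0$ and rearranging gives $\|v\|_\Xi^2 - \gamma\langle v, \Ppi v\rangle_\Xi > 0$, hence $\langle x, \APhi x\rangle > 0$. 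Thus $\APhi$ is positive-definite in the sense used in the paper.

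Next I would pass from positive-definiteness to the spectral condition of Proposition~\ref{prop:general_stability_conditions}. Let $\lambda \in \C$ be an eigenvalue of the real matrix $\APhi$ with eigenvector $z = a + ib \in \C^d$, $z \neq 0$. Then $z^* \APhi z = \lambda \|z\|_2^2$, while expanding the Hermitian form yields $\Re(z^* \APhi z) = a^\top \APhi a + b^\top \APhi b$, which is strictly positive because $a$ and $b$ do not both vanish and $\APhi$ is positive definite. Therefore $\Re(\lambda) = (a^\top \APhi a + b^\top \APhi b)/\|z\|_2^2 > 0$, so $\spectrum(\APhi) \subset \C_+$, and Proposition~\ref{prop:general_stability_conditions} gives that \td is stable.

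There is no serious obstacle here. The only points requiring mild care are the reduction from "positive-definite nonsymmetric matrix" to "eigenvalues with positive real part", handled by the real/imaginary splitting of a complex eigenvector above, and the use of the full-rank assumption on $\Phi$ so that $x \mapsto \Phi x$ is injective and the strict inequality defining $\mathcal{S}_{PD}$ is preserved when pulled back to $\R^d$.
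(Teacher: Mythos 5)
Your proposal is correct and follows essentially the same route as the paper's proof: pull the quadratic form back through $v = \Phi x$ to get $\langle v,(I-\gamma\Ppi)v\rangle_\Xi>0$ from the definition of $\mathcal{S}_{PD}$, then split a complex eigenvector into real and imaginary parts to conclude $\Re(\lambda)>0$. Your explicit appeal to the full-rank assumption to ensure $v\neq 0$ is a small point the paper leaves implicit, but the argument is otherwise identical.
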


The Laplacian representation, which computes the spectral eigendecomposition of the symmetrized transition matrix 
\[K := \frac{1}{2}\left(\Ppi + \Xi^{-1}{\Ppi}^\top\Xi\right) = U\Lambda U^\top\Xi,\]
provides an interesting bifurcation of value functions into those that are positive-definite and those that are not. As a consequence of Theorem \ref{thm:positive_definite_representations_stable}, a stable representation is obtained by using eigenvectors corresponding to eigenvalues smaller or equal to $\frac1\gamma$.

\begin{restatable}{proposition}{pdeig}
\label{prop:pd_eig}
Let $\lambda_1, \dots, \lambda_n$ be the eigenvalues of $K$, in decreasing order, and $u_1, \dots, u_n$ the corresponding eigenvectors. Define $d^*$ as the smallest integer such that $\lambda_{d^*} < \frac1\gamma$.
For any $i \le n - d^{*}$, the safe Laplacian representation $\Phi$, defined as 
\[\Phi = [u_{d^*}, u_{d^*+1}, \dots, u_{d^*+i}],\] 
is positive-definite and stable.
\end{restatable}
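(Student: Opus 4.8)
The plan is to exploit that $K$ is self-adjoint with respect to $\langle\cdot,\cdot\rangle_\Xi$ and that the quadratic form $v \mapsto \langle v, \Ppi v\rangle_\Xi$ only ``sees'' the symmetric part of $\Ppi$, which is exactly $K$. Combined with the ordering of the eigenvalues, this turns the positive-definiteness of $\Phi$ into a one-line Rayleigh-quotient estimate, after which Theorem~\ref{thm:positive_definite_representations_stable} finishes the job.

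First I would record the spectral structure of $K$. Since $\Xi$ is a positive diagonal matrix, $\Xi^{1/2} K \Xi^{-1/2} = \tfrac12\big(\Xi^{1/2}\Ppi\Xi^{-1/2} + (\Xi^{1/2}\Ppi\Xi^{-1/2})^\top\big)$ is a real symmetric matrix, so $K$ has real eigenvalues $\lambda_1 \ge \cdots \ge \lambda_n$ and a $\Xi$-orthonormal eigenbasis $u_1,\dots,u_n$, i.e.\ $\langle u_j, u_k\rangle_\Xi = \delta_{jk}$ and $U^\top\Xi U = I$ — this is the decomposition $K = U\Lambda U^\top\Xi$ stated just before the proposition. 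In particular the columns of $\Phi = [u_{d^*},\dots,u_{d^*+i}]$ are linearly independent, so $\Phi$ is a legitimate (orthogonal) representation. I would also observe in passing that $d^*$ is always well defined: evaluating the Rayleigh quotient of $K$ at a coordinate vector $e_s$ gives $\langle e_s, K e_s\rangle_\Xi / \|e_s\|_\Xi^2 = \Ppi_{ss} \le 1 < \gamma^{-1}$, hence $\lambda_n < \gamma^{-1}$.

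Next comes the key identity: for every $v \in \R^n$,
\[\langle v, \Ppi v\rangle_\Xi = \langle v, K v\rangle_\Xi ,\]
since $\langle v, \Xi^{-1}(\Ppi)^\top\Xi v\rangle_\Xi = v^\top (\Ppi)^\top \Xi v = (v^\top\Xi\Ppi v)^\top = \langle v, \Ppi v\rangle_\Xi$, so averaging $\Ppi$ with $\Xi^{-1}(\Ppi)^\top\Xi$ leaves the quadratic form unchanged. Now take any nonzero $v \in \spann(\Phi)$ and expand $v = \sum_{j=d^*}^{d^*+i} c_j u_j$; $\Xi$-orthonormality gives $\|v\|_\Xi^2 = \sum_j c_j^2 > 0$ and
\[\langle v, \Ppi v\rangle_\Xi = \langle v, K v\rangle_\Xi = \sum_{j=d^*}^{d^*+i} \lambda_j c_j^2 \;\le\; \lambda_{d^*} \sum_{j} c_j^2 \;=\; \lambda_{d^*}\,\|v\|_\Xi^2 \;<\; \gamma^{-1}\|v\|_\Xi^2 ,\]
where the first inequality uses that the eigenvalues are in decreasing order (so $\lambda_j \le \lambda_{d^*}$ for all $j \ge d^*$) and the last uses $\lambda_{d^*} < \gamma^{-1}$ by the definition of $d^*$. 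Thus $v \in \mathcal{S}_{PD}$, so $\spann(\Phi)\subseteq \mathcal{S}_{PD}$ and $\Phi$ is positive-definite; stability is then immediate from Theorem~\ref{thm:positive_definite_representations_stable}.

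Since the argument is essentially a Rayleigh-quotient estimate in the eigenbasis of $K$, I do not anticipate a genuine obstacle. The only points needing care are the $\Xi$-inner-product bookkeeping — checking that $K$ is $\Xi$-self-adjoint and that $\langle v,\Ppi v\rangle_\Xi$ depends on $\Ppi$ only through $K$ — and the minor caveat that the zero vector is excluded from the positive-definiteness condition, exactly as in the paper's definition of a positive-definite matrix.
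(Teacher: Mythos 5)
Your proof is correct and follows essentially the same route as the paper's: expand $v$ in the $\Xi$-orthonormal eigenbasis of $K$, use that $\langle v,\Ppi v\rangle_\Xi = \langle v,Kv\rangle_\Xi$ to reduce to a Rayleigh-quotient bound by $\lambda_{d^*}<\gamma^{-1}$, and invoke Theorem~\ref{thm:positive_definite_representations_stable}. The extra bookkeeping you include (self-adjointness of $K$, well-definedness of $d^*$) is a welcome addition but does not change the argument.
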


While including eigenvectors for larger eigenvalues does not guarantee divergence, the basis $[u_1, \dots, u_i]$ for $i < d^*$ \emph{is} unstable (See appendix).
When the data is on-policy, all eigenvalues of $K$ are below the threshold $\tfrac1\gamma$, and the safe Laplacian corresponds exactly to the original representation.

We finish our discussion with a cautionary point. Although positive-definite representations admit amenable optimization properties, such as invariance to reparametrization and monotonic convergence, they can only express value functions that satisfy a growth condition.
Under on-policy sampling this growth condition is nonrestrictive, but as the policy deviates from the data distribution, the expressiveness of positive-definite representations reduces greatly.

\section{Experiments}\label{sec:experiments}
We complement our theoretical results with an experimental evaluation, focusing on the following questions:

\begin{itemize}[noitemsep]
\item How closely do the theoretical conditions we describe match stability requirements in practice? 
    \item Can stable representations be learned using samples?
    \item Can they be learned using neural networks?
\end{itemize}

We conduct our study in the four-room domain \citep{Sutton1999BetweenMA}. We augment this domain with a task where the agent must reach the top right corner to receive a reward of +1 (Figure \ref{fig:four_rooms_env}). The policy evaluation problem is to accurately estimate the value function of a near-optimal policy from data consisting of trajectories sampled by an uniform policy.

We are interested in the usefulness of the representation learning schemes summarized in Table \ref{table:representation_table} as a function of the number of features $d$ that are used. We measure both the stability of the learned representation and its accuracy in estimating the greedy policy with respect to the fixed value function. We chose the latter measure as it is more informative than value approximation error when the number of features is small. See Appendix \ref{sec:appendix_experiments} for full details about the experimental setup.

\begin{figure}
    \centering
    \includegraphics[width=0.45\linewidth]{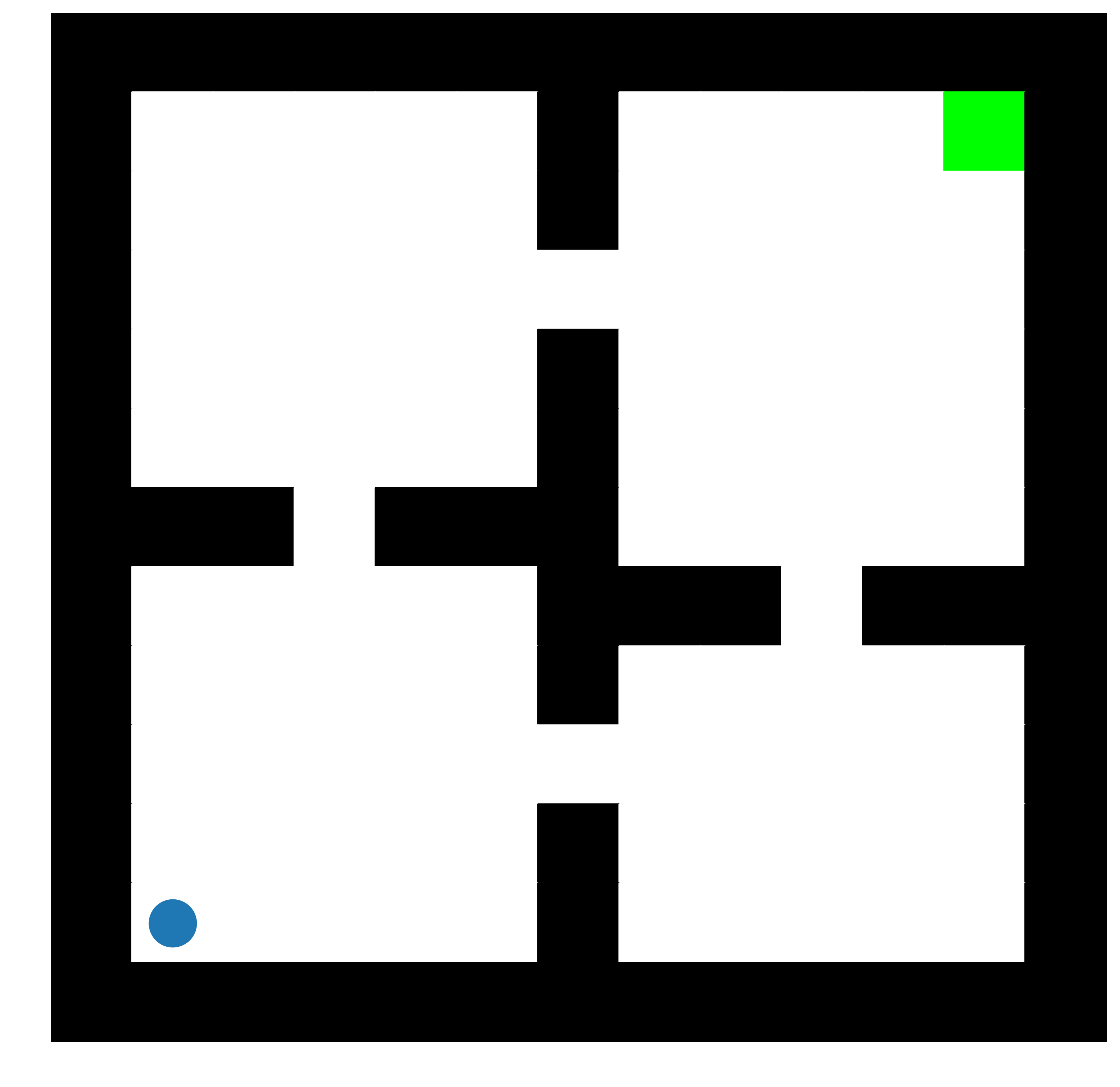}
    \includegraphics[width=0.45\linewidth]{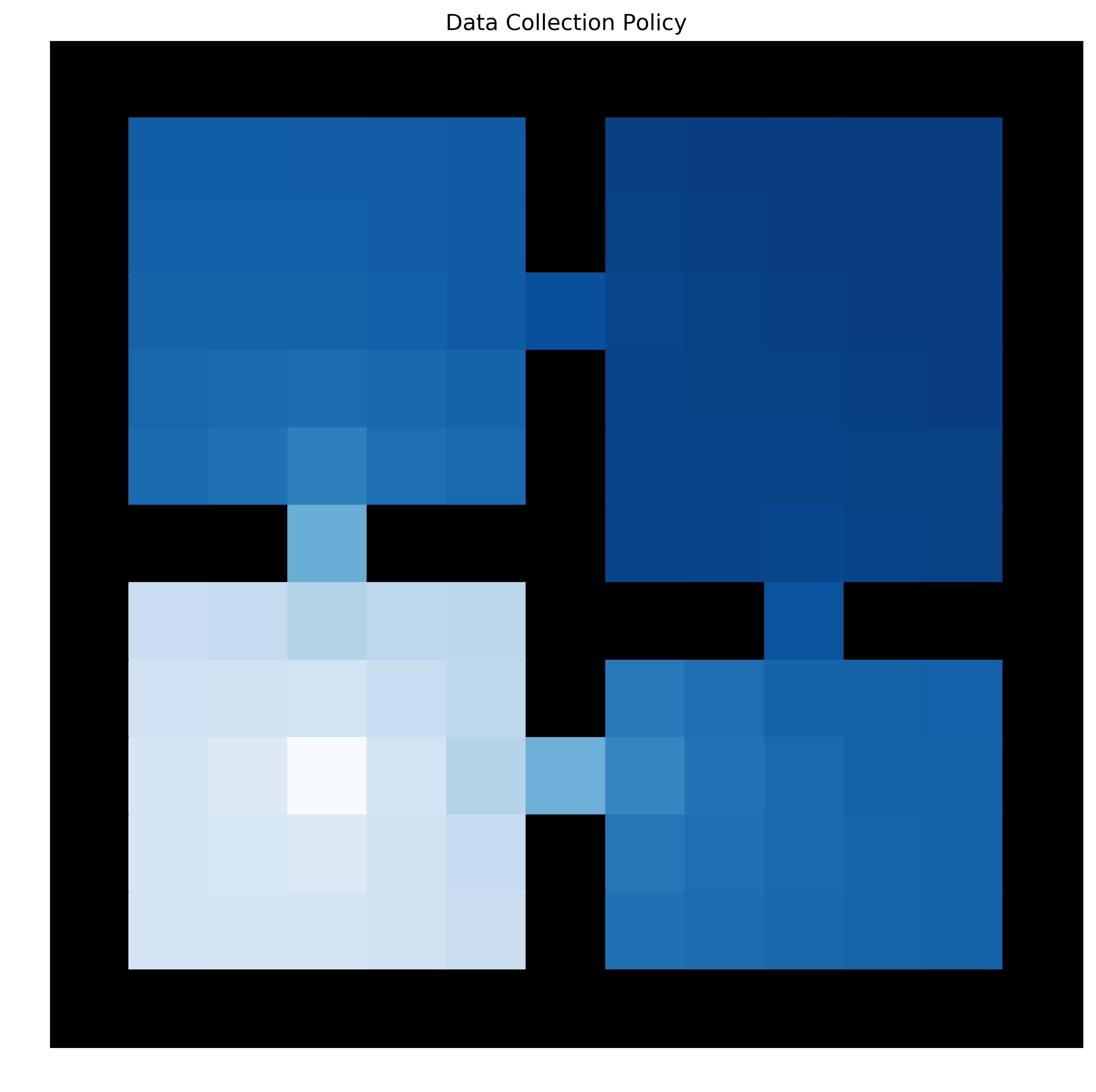}
    \caption{Left: The four room domain. Right: Data distribution}
    \label{fig:four_rooms_env}
    \vspace{-0.15in}
\end{figure}
\begin{figure}
    \centering
    \includegraphics[width=\linewidth]{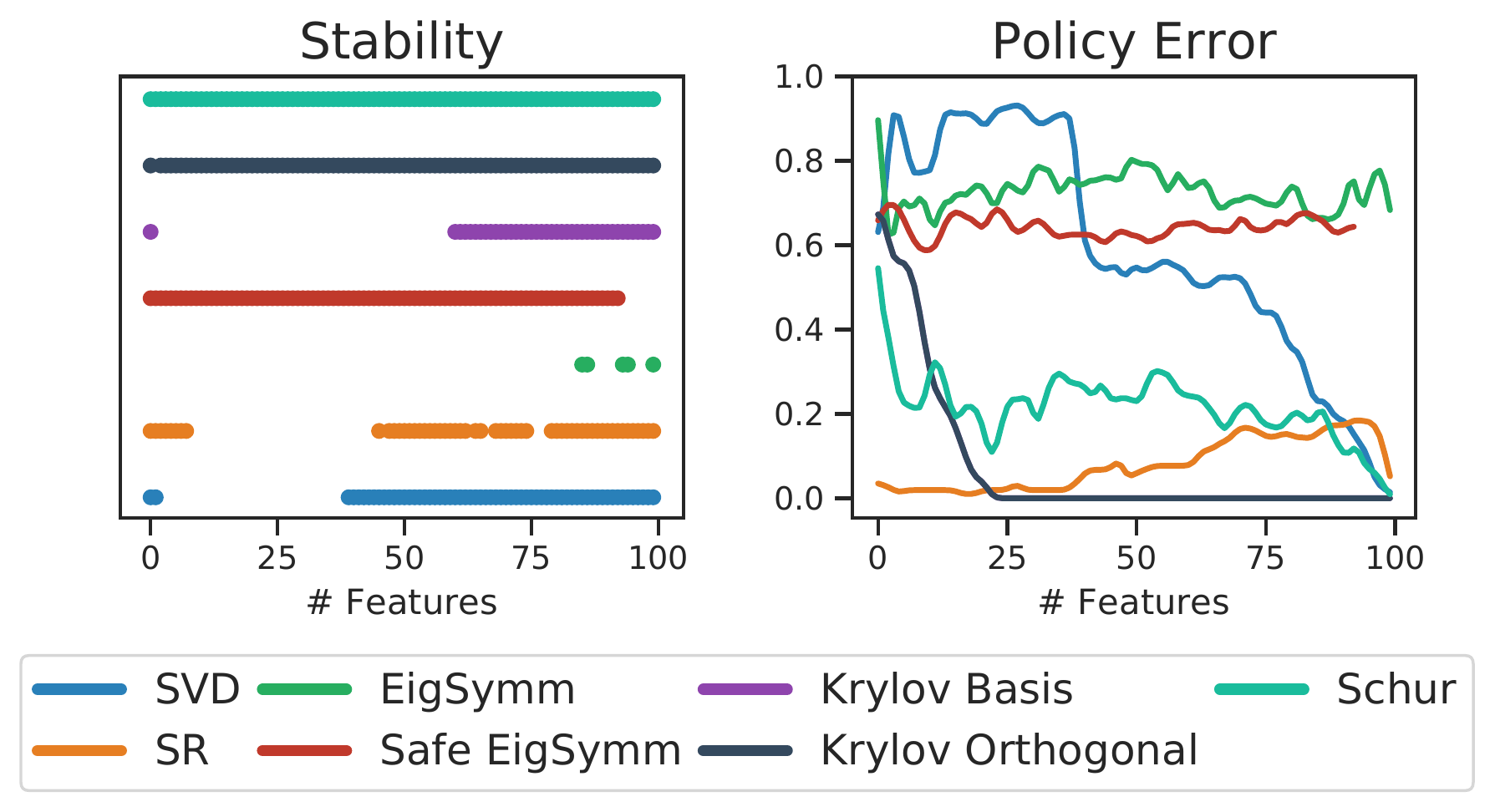}
    \vspace{-0.2in}
    \caption{Stability (left) and approximation error (right) for different representation learning objectives. For stability, a marker is placed at $d$ if the first $d$ basis vectors forms a stable representation.}
    \label{fig:plots_fourroom_stability}
    \vspace{-0.15in}
\end{figure}

\textbf{Exact Representations:} We first consider the quality of the representations in exact form, assuming access to the true transition matrix and reward function (Figure \ref{fig:plots_fourroom_stability}). We find that the general empirical profiles for stability match our theoretical characterizations. Singular vectors of the successor representation have low error but are unstable for most choices of small $d$. Although the Krylov basis of rewards and its orthogonalization both have the same estimation errors, they have drastically different stability profiles, confirming our analysis from Section \ref{sec:approx_invariant_repr}. Amongst the proposed methods that consistently produce stable representations, the Schur basis admits low error and with enough features, is fully expressible. In contrast, the safe Laplacian representation takes an irrecoverable performance hit, as it discards the top eigenvectors of the symmetrized transition matrix that contain reward-relevant information.

\textbf{Estimation with Samples:} In practice, representations must be learned from finite data. To test the numerical robustness of the representation learning schemes, we construct an empirical transition matrix from a variable number of samples and learn a representation using this matrix.

We measure the difference between the subspaces spanned by the estimated and true representation (Figure \ref{fig:plots_fourroom_learnability_samples}). We find that estimating the Schur representation can be more challenging than the other methods, and requires an order of magnitude more data to accurately compute than representations for singular vectors and spectral decompositions. This is a well-known problem in numerical linear algebra, as eigenspaces for nonsymmetric matrices (\textsc{Schur}) are more sensitive to perturbation and estimation error than for eigenspaces of symmetric matrices (\textsc{Spectral}, \textsc{Svd}). This implies a three-way tradeoff between stability, approximation error, and ease of estimation when choosing a representation for a general environment. The successor representation is unstable, the safe Laplacian is limited in its approximation power,
and the Schur decomposition is harder to learn from samples.
The orthogonal Krylov basis emerges as a strong method by these measures, but requires additional knowledge in the guise of the reward function.

\begin{figure}
    \centering
    \includegraphics[width=0.90\linewidth]{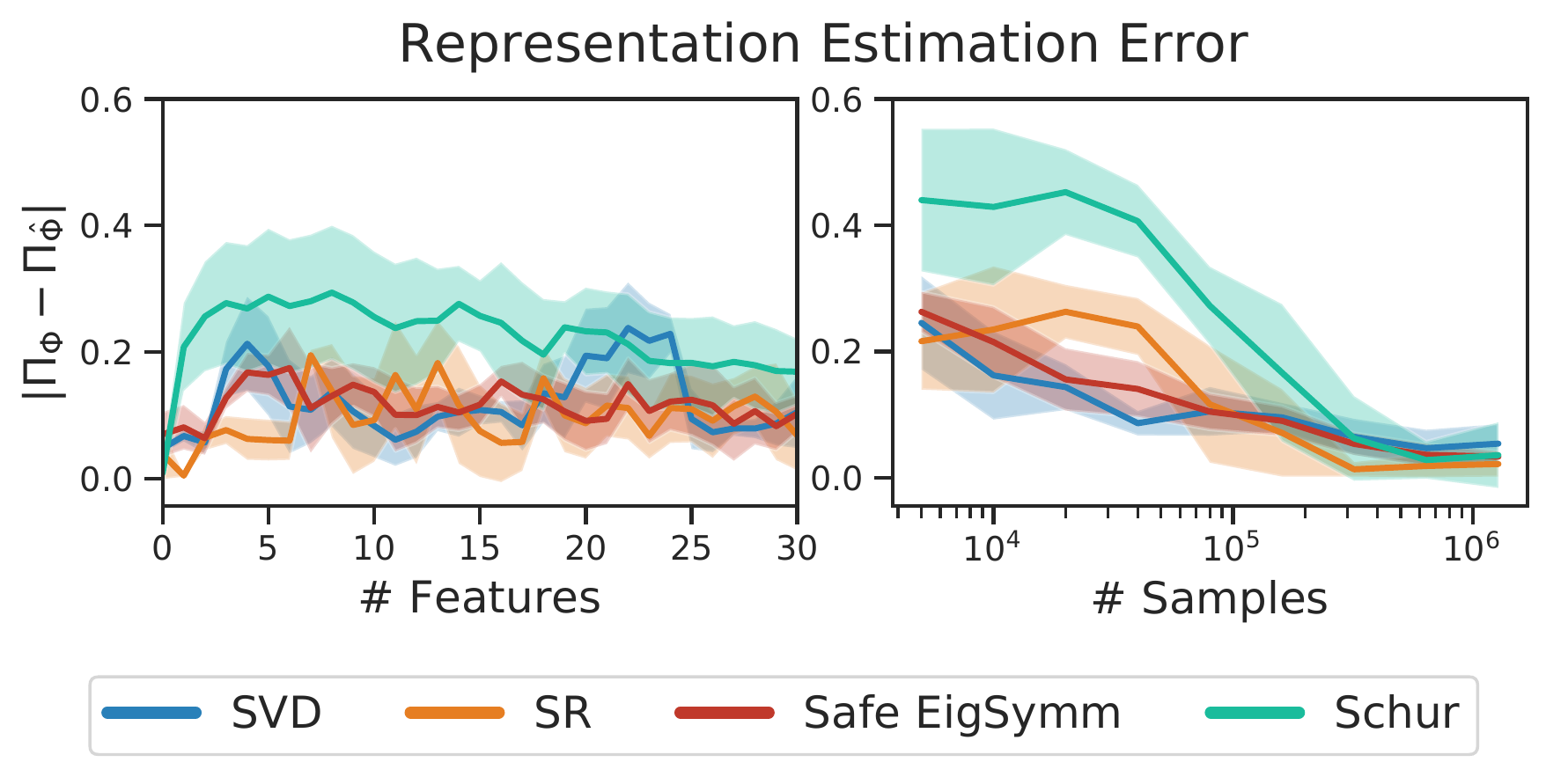}
\vspace{-0.2in}
\caption{Learnability. We measure the difference between the true representation and one learned from an empirical transition matrix constructed from samples. Left: Error with 50000 transitions varying the number of features. Right: Error as the number of transitions varies when learning the first 10 features.}
    \label{fig:plots_fourroom_learnability_samples}
\vspace{-0.15in}
\end{figure}

\textbf{Estimation with Neural Networks:} In our final set of experiments, we show that the Schur representation and the orthogonal Krylov representation can be learned by neural networks by performing stochastic gradient descent on certain auxiliary objectives.

It has been noted previously that training a representation network with a final linear layer to predict features causes the neural network to learn a basis for the target features \citep{Bellemare2019AGP}. A $d$-dimensional Krylov representation then can be learned by predicting reward values at the next $d$ time-steps. Similarly, orthogonal iteration for learning the Schur representation (Proposition \ref{prop:orthogonal_repr}) can be approximated with a two-timescale algorithm that (a) at each step, predicts the feature values of a fixed target representation network at the \textit{next time step} and (b) infrequently refreshes the target representation network with the current. As our stability guarantees hold for orthogonal representations, the neural network must learn uncorrelated features, which can be enforced explicitly or with a penalty-based orthogonality loss \citep{Wu2018TheLI}. We fully describe the auxiliary objectives and provide implementation details in Appendix \ref{sec:appendix_experiments}.

Figure \ref{fig:plots_fourroom_gradient_descent} demonstrates that these predictive losses can be optimized easily with neural networks and can learn stable approximately invariant representations. We note that this auxiliary task of predicting future latent states has been heuristically proposed before \citep{francoislavet18combined, Gelada2019DeepMDPLC}, as a way to improve approximation errors. Our results indicate that such auxiliary tasks may not only help reduce approximation error, but more importantly, can mitigate divergence in the learning process and provide for stable optimization. 
\begin{figure}
    \centering
    \includegraphics[width=0.95\linewidth]{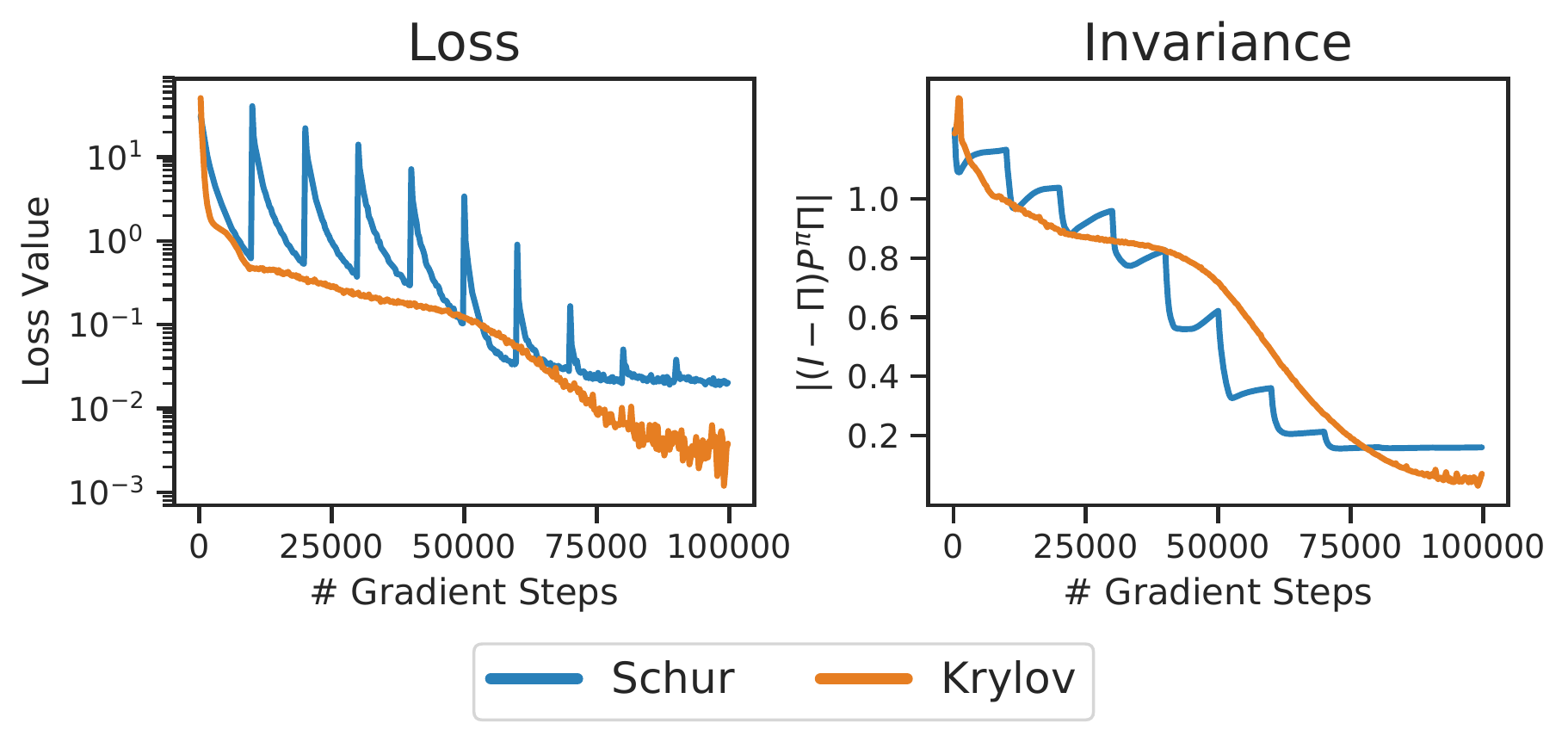}
    \vspace{-0.2in}
    \caption{Learning to predict future rewards (Krylov) or future feature values (Schur) discovers approximately invariant stable representations.}
    \label{fig:plots_fourroom_gradient_descent}
    \vspace{-0.2in}
\end{figure}
\section{Conclusion}

We have presented an analysis of stability guarantees for value-function learning under various representation learning procedures. Our analysis provides conditions for stability of many algorithms that learn features from transitions, and demonstrates how representation learning procedures constrained to respect the geometry of the transition matrix can induce stability. We demonstrated that the Schur decomposition and orthogonal Krylov bases are rich representations that mitigate divergence in off-policy value function learning, and further showed that they can be learned using stochastic gradient descent on a loss function.

Our work provides formal evidence that representation learning can prevent divergence without sacrificing approximation quality. To carry our results to the full practical case, stability should be extended to the sequence of policies that are encountered during policy iteration. One should also consider the effects of learning value functions and representations concurrently, and the ensuing interactions in the representation. Our work suggests that studying stable representations in these contexts can be a promising avenue forward for the development of principled auxiliary tasks for stable deep reinforcement learning.

\section*{Acknowledgements}
We thank Nicolas Le Roux, Marlos C. Machado, Courtney Paquette, Fabian Pedregosa, Doina Precup, and Ahmed Touati for helpful discussions and contributions.
We additionally thank Marlos C. Machado and Courtney Paquette for constructive feedback on an earlier manuscript.
\bibliography{example_paper}
\bibliographystyle{icml2020}
\clearpage
\appendix
\onecolumn
\section{Linear Algebra and Spectral Theory}
\label{sec:appendix_pd}
\subsection{Inner Products}
A positive-definite symmetric matrix $D \in \R^{k \times k}$ induces an inner product $\langle \cdot, \cdot\rangle_D$ and norm $\|\cdot\|_D$ on $\R^k$. Specifically, the inner product is written as $\langle v, w\rangle_D = v^\top Dw$, and the corresponding norm $\| v\|_D^2 = \langle v, v \rangle_D = v^\top D v$. This corresponds to a Hilbert space $(\R^k, \langle \cdot, \cdot \rangle_D)$. In our work, we equip $\R^n$ (where $n = |\states \times \actions|$) with the inner-product induced by the data distribution $\Xi$. We also equip $\R^d$ (the parameter space) with the usual Euclidean inner product.

Most definitions and constructions with the Euclidean inner product generalize to arbitrary Hilbert spaces, some which we describe on $\R^n$. Two vectors $v, w \in \R^n$ are \textit{orthogonal} if $\langle v, w \rangle_\Xi = v^\top \Xi w = 0$. A matrix $A \in \R^{n \times d}$ is \textit{orthogonal} if the columns have unit norm, and are orthogonal to one another: $A^\top \Xi A = I$. The generalization of transposes and symmetrice matrices comes through the adjoint of a matrix $A \in \R^{n \times n}$, written as $A^* = \Xi^{-1}A^\top\Xi$. A matrix is self-adjoint if $A = A^*$, and for matrices that are not self-adjoint, the symmetric component is given as $\bar{A} = \frac{1}{2}(A+A^*)$. We refer to $\|A\|$ as the matrix norm induced by the equivalent norm on vectors.

Matrix decompositions for a matrix $A \in \R^{n \times n}$ can be re-visited with respect to this inner-product.
\begin{itemize}
    \item \textbf{Spectral Decomposition:} If $A$ is self-adjoint, it admits a decomposition $A = U\Lambda U^\top\Xi$, where $U \in \R^{n \times n}$ is an orthogonal matrix whose columns are eigenvectors of $A$ and $\Lambda$ a diagonal matrix with the corresponding eigenvalues. 
    \item \textbf{SVD:} $A$ admits a decomposition $A = U\Sigma V^\top\Xi$, where $U \in \R^{n \times n}$ is an orthogonal matrix whose columns are the left singular vectors of $A$, $V \in \R^{n \times n}$ is an orthogonal matrix whose columns are the right singular vectors of $A$, and $\Lambda$ a diagonal matrix with the corresponding singular values. Letting $U_d, V_d \in \R^{n \times d}$ correspond to the first $d$ singular vectors and $\Sigma_d \in \R^{d \times d}$ the diagonal matrix with the corresponding singular values, then the low-rank approximation $\hat{A} = U_d\Sigma_dV_d^\top\Xi$ minimizes $\|A-\hat{A}\|_\Xi$ amongst all rank $d$ matrices.
\end{itemize}
\subsection{Eigenvalues}
We define the eigenvalues of $A \in \C^{k \times k}$ to be the roots of the characteristic polynomial $p(t) = det(A-tI)$. Some eigenvalues may correspond to a multiple root -- we refer to this multiplicity as the algebraic multiplicity.  Every eigenvalue $\lambda$ corresponds to an eigenspace $\mathcal{V}_\lambda$ of eigenvectors with this eigenvalue. If the algebraic multiplicity of any eigenvalue $\lambda$ does not equal the dimensionality of $\mathcal{V}_\lambda$, then $A$ is said to be \textit{defective}. Otherwise, the matrix $A$ is diagonalizable as $PDP^{-1}$, where $P$ is a basis of eigenvectors of $A$, and $D$ the corresponding eigenvalues.

We write $\spectrum(A) = \{\lambda_1, \dots \lambda_k\} \subset \C$ to denote the set of eigenvalues of the matrix $A$.  The spectral radius of a matrix is the maximum magnitude of eigenvalues, written as $\rho(A) = \sup_{\lambda \in \spectrum(A)} |\lambda|$. For two matrices $A \in \C^{k \times m}, B \in \C^{m \times k}$, we have the following cyclicity: $\spectrum(AB) \backslash \{0\} = \spectrum(BA) \backslash \{0\}$. As a consequence, we also have that $\rho(AB) = \rho(BA)$. We utilize this cyclicity heavily in the ensuing proofs.

The perturbation of eigenvalues for a diagonalizable matrix can be bounded simply via the Bauer-Fike theorem. Specifically, if $A \in \C^{k \times k}$ is diagonalizable as $PDP^{-1}$, then eigenvalues of the perturbed matrix $\lambda' \in \spectrum(A+E)$ can be bounded in distance from the original eigenvalues as $\inf_{\lambda \in \spectrum(A)} |\lambda - \lambda'| \leq \|E\|\kappa(P)$, where $\kappa(P) = \|P\|\|P^{-1}\|$. As a simple corollary of the Bauer-Fike Theorem, we have that $\rho(A+E) \leq \rho(A) + \|E\|\kappa(P)$.

\clearpage
\section{Proofs}
\label{sec:appendix_proofs}

\generalstabilityconditions*

\begin{proof}[Proof of Proposition \ref{prop:general_stability_conditions}]
We review the update taken by \td (\eqref{eq:td0_lfa}), rewritten to express the connection to the implied iteration matrix $\APhi = \Phi^\top\Xi(I-\gamma \Ppi)\Phi$.  Notice that $\APhi \theta_{TD}^* = \Phi^\top\Xi r$.
\begin{align*}
    \theta_{k+1} - \theta_{TD}^* &= \theta_k - \eta \left(\Phi^\top\Xi(I-\gamma \Ppi)\Phi\theta_k - \Phi^\top\Xi r\right) - \theta_{TD}^*\\
    &= \theta_k - \theta_{TD}^* - \eta\left(\APhi \theta_k - \APhi\theta_{TD}^*\right)\\
    &= (I-\eta \APhi)(\theta_k - \theta_{TD}^*)\\
\intertext{Unrolling the iteration, the error to the optimal solution takes the form}
    \theta_{k} - \theta_{TD}^* &= (I-\eta \APhi)^{k} (\theta_0 - \theta_{TD}^*)\\
\end{align*}
This above iteration converges from any initialization $\theta_0$ if and only if the spectral radius is bounded by  one: $\rho(I-\eta \APhi) < 1$. 

From here, we can easily show that \td is stable if and only if $\spectrum(\APhi) \subset \C_+$. If there is some step-size $\eta > 0$ for which  $\rho(I-\eta \APhi) < 1$, then $\spectrum(\APhi) \subset \C_+$. Similarly, if $\spectrum(\APhi) \subset \C_+$, then letting $\eta = \min_{\lambda \in \spectrum(\APhi)} \frac{\Re(\lambda)}{|\lambda|^2}$ satisfies that $\rho(I-\eta \APhi) < 1$. 

\end{proof}

\orthogonalrepr*
\begin{proof}[Proof of Proposition \ref{prop:orthogonal_repr}]

For an orthogonal representation, the iteration matrix can be written as $A_{TD}^\Phi = I - \gamma \Phi^\top\Xi\Ppi\Phi$. Then, 
\begin{align*}
    \spectrum(\APhi) \subset \C_+ &\iff \spectrum(\Phi^\top\Xi\Ppi\Phi) \subset \{z \in \C: \Re(z) < \tfrac{1}{\gamma}\} \\
    &\iff \spectrum(\Pi\Ppi) \subset \{z \in \C: \Re(z) < \tfrac{1}{\gamma}\}\\
    &\iff \spectrum(\Pi\Ppi\Pi) \subset \{z \in \C: \Re(z) < \tfrac{1}{\gamma}\}
\end{align*}

The second step falls from the cyclicity of the spectrum and the observation that for an orthogonal representation $\Phi$, the projection can be written as  $\Phi\Phi^\top\Xi = \Pi$. The spectral radius condition is immediate.
\end{proof}

\svdconditions*

\begin{proof}[Proof of Proposition \ref{prop:svd_conditions}]
We can write the SVD factorization of the transition matrix as  \[\Ppi = \begin{bmatrix} U_1 & U_2\end{bmatrix} \begin{bmatrix} \Sigma_1 & 0 \\ 0 & \Sigma_2\end{bmatrix}\begin{bmatrix} V_1^\top \\ V_2^\top \end{bmatrix}\Xi\]

Then, for $\Phi_{SVD} = U_1$, $\Pi\Ppi = U_1\Sigma_1V_1^\top\Xi = \hat{\Ppi}$. The necessary and sufficient conditions follow from Proposition \ref{prop:orthogonal_repr}.
\end{proof}

\srconditions*
\begin{proof}[Proof of Proposition \ref{prop:sr_conditions}]
We can write the SVD factorization of the successor representation $\Psi = (I-\gamma \Ppi)^{-1}$ \[\Psi = \begin{bmatrix} U_1 & U_2\end{bmatrix} \begin{bmatrix} \Sigma_1 & 0 \\ 0 & \Sigma_2\end{bmatrix}\begin{bmatrix} V_1^\top \\ V_2^\top \end{bmatrix}\Xi~~~~~~~~~(I-\gamma \Ppi) = \begin{bmatrix} V_1 & V_2\end{bmatrix} \begin{bmatrix} \Sigma_1^{-1} & 0 \\ 0 & \Sigma_2^{-1}\end{bmatrix}\begin{bmatrix} U_1^\top \\ U_2^\top \end{bmatrix}\Xi\]

Then, for $\Phi_{SR} = U_1$, the iteration matrix can be written as $\APhi = U_1^\top\Xi V_1 \Sigma_1^{-1}$. 

Now, write $\hat{\Psi}$ as $U_1\Sigma_1V_1^\top\Xi$, and denote $\hat{\Psi}^+$ the Moore-Penrose pseudoinverse, written as $\hat{\Psi}^+ = V_1 \Sigma_1^{-1}U_1^\top\Xi$. Cyclicity of the spectrum shows that the eigenvalues of the iteration matrix $\APhi$ coincide with those of $\hat{\Psi}^+$.
\[\spectrum(\hat{\Psi}^+) = \spectrum(V_1 \Sigma_1^{-1}U_1^\top\Xi) = \spectrum(U_1^\top\Xi V_1 \Sigma_1^{-1}) \bigcup \{0\} = \spectrum(\APhi) \bigcup \{0\}.\] 

We obtain the result by recognizing that all the eigenvalues of $\hat{\Psi}^+$ have positive real component iff the same is true for $\hat{\Psi}$: 
\[\spectrum(\hat{\Psi}) \subset \mathbb{C}_+ \cup \{0\} \iff  \spectrum(\hat{\Psi}^+) \subset \mathbb{C}_+  \cup \{0\}. \]
\end{proof}

\invariantrepr*
\begin{proof}[Proof of Theorem \ref{thm:invariant_representations}]
Let $\lambda$ be an nonzero eigenvalue of $\Pi\Ppi\Pi$ with an eigenvector $v$. Since $\Pi \Ppi \Pi v = \lambda v$, $v \in \spann(\Phi)$. 

Since $\Ppi$ is invariant on $\spann(\Phi)$, $\Ppi v = \lambda v$, and therefore $\lambda$ is an eigenvalue of $\Ppi$. Therefore, $\spectrum(\Pi\Ppi\Pi) \subset \spectrum(\Ppi) \bigcup \{0\}.$ 

The spectrum of $\Ppi$ implies the stability of the representation. $\Ppi$ is a stochastic matrix satisfying $\rho(\Ppi) = 1$, and thus $\rho(\Pi\Ppi\Pi)\leq 1$, implying stability through Proposition \ref{prop:orthogonal_repr}.
\end{proof}

\schurorthog*
\begin{proof}[Proof of Proposition \ref{prop:schur_orthog}]
See Theorem 7.3.1 in \citet{golub13}.
\end{proof}

\approxinvariantrepr*
\begin{proof}[Proof of Theorem \ref{thm:approximate_invariant_representations}]
We can rewrite the definition of $\epsilon$-invariance in terms of a matrix norm: $\|\Ppi\Pi - \Pi\Ppi\Pi\|_\Xi < \epsilon$. Thus, letting $E = \Pi\Ppi\Pi - \Ppi\Pi$, we have $\|E\|_\Xi < \epsilon$.

Now, suppose that $\Pi\Ppi\Pi$ has an eigenvalue, eigenvector pair $(\lambda, v)$. This means that $v \in \spann(\Phi)$. 
\[\lambda v = \Pi\Ppi\Pi v = \Ppi \Pi v + Ev = \Ppi v + Ev \implies \lambda \in \spectrum(\Ppi + E)\]

Now, the Bauer-Fike Theorem (see Appendix \ref{sec:appendix_pd} above) thus implies that $\rho(\Pi\Ppi\Pi) < \rho(\Ppi) + \epsilon \kappa_\Xi(A) < 1 + \epsilon \kappa_\Xi(A)$. Now,  if $\epsilon < \frac{1-\gamma}{\gamma}\frac{1}{\kappa_\Xi(A)}$, then $\rho(\Pi\Ppi\Pi) < \gamma^{-1}$, and stability follows from Proposition \ref{prop:orthogonal_repr}.
\end{proof}

\krylovinvariancy*
\textbf{Remark:} The vector $v$ can be interpreted as the component of the reward at the $d$-th timestep that cannot be predicted from the first $d-1$ timesteps.

\begin{proof}[Proof of Proposition \ref{prop:krylov_invariancy}]
Any vector $v \in \mathcal{K}_d(\Ppi, r)$ can be decomposed into two components: $\Pi_{d-1} v + (I-\Pi_{d-1})v$. 
\begin{align*}
\frac{\|\Pi \Ppi v - \Ppi v\|_\Xi}{\|v\|_\Xi} &= \frac{\|\Pi \Ppi \left(\Pi_{d-1} v + (I-\Pi_{d-1})v\right) - \Ppi \left(\Pi_{d-1} v + (I-\Pi_{d-1})v\right)\|_\Xi}{\|\Pi_{d-1} v + (I-\Pi_{d-1})v\|_\Xi}\\ 
&= \frac{\|\Pi \Ppi (I-\Pi_{d-1}) - \Ppi (I-\Pi_{d-1})v\|_\Xi}{\|\Pi_{d-1} v\|_\Xi + \|(I-\Pi_{d-1})v\|_\Xi} 
\end{align*}
This expression is maximized whenever $v$ is nonzero and $\|\Pi_{d-1} v\|_\Xi = 0$, which is true whenever $v = (I-\Pi_{d-1})(\Ppi)^{d-1} r$.
\[\sup_{v \in \spann(\Phi)} \frac{\|\Pi \Ppi v - \Ppi v\|_\Xi}{\|v\|_\Xi} = \frac{\|\Pi \Ppi v - \Ppi v\|_\Xi}{\|v\|_\Xi}\]
\end{proof}

\pdreprstability*
\begin{proof}[Proof of Theorem \ref{thm:positive_definite_representations_stable}]
First, we show that the iteration matrix $\APhi$ is positive-definite, and then show that this implies stability.

For any $x \in \R^d$, let $v = \Phi x$. Because $\Phi$ is positive-definite, $v \in \mathcal{S}_{PD}$. Notice that rearranging the definition of positive definiteness implies that $\langle v, (I-\gamma \Ppi)v\rangle_\Xi > 0$.

\[x^\top A_{TD}^\Phi x = v^\top\Xi(I-\gamma \Ppi)v = \langle v, (I-\gamma P^\pi)v \rangle_\Xi > 0.\]

Now, we consider an eigenvalue $\lambda$ of the iteration matrix $\APhi$, and a corresponding unit eigenvector $x \in \C^d$. Writing $x = a+ib$ for $a, b \in \R^d$,
\[\Re(\bar{x}^\top \APhi x) = \Re((a-ib)^\top \APhi (a+ib)) = a^\top \APhi a + b^\top \APhi b > 0.\]

Noticing that $\bar{x}^\top \APhi x = \lambda \bar{x}^\top x = \lambda$, and therefore the real component of $\lambda$ is positive, $\Re(\lambda) > 0$.
\end{proof}

\pdeig*
\begin{proof}[Proof of Proposition \ref{prop:pd_eig}]
We shall show that $\spann(\{u_{d^*}, u_{d^* +1}, \dots, u_{n}\}) \subseteq \mathcal{S}_{PD}$, which implies the proposition.

\[\langle v, \Ppi v\rangle_{\Xi} = \langle v, \tfrac{1}{2}(\Ppi + \Xi^{-1}(\Ppi)^\top\Xi) v\rangle_{\Xi}\]

Consider some $v \in \spann(\{u_{d^*}, u_{d^* +1}, \dots, u_{n}\})$ which can be expressed as $\sum_{k=d^*}^n \alpha_k u_k$. We have 
\begin{align*}
\langle v, \Ppi v\rangle_{\Xi} &= \langle v, \tfrac{1}{2}(\Ppi + \Xi^{-1}(\Ppi)^\top\Xi) v\rangle_{\Xi}\\
&= \left\langle \sum_{k=d^*}^n \alpha_k u_k, \tfrac{1}{2}(\Ppi + \Xi^{-1}(\Ppi)^\top\Xi) \sum_{k=d^*}^n \alpha_k u_k\right\rangle_{\Xi}\\
&= \left\langle \sum_{k=d^*}^n \alpha_k u_k, \sum_{k=d^*}^n \lambda_k\alpha_k u_k\right\rangle_{\Xi}\\
&< \gamma^{-1} \left\langle \sum_{k=d^*}^n \alpha_k u_k, \sum_{k=d^*}^n \alpha_k u_k\right\rangle_{\Xi}\\
&= \gamma^{-1}\|v\|_\Xi^2
\end{align*}
Hence, $v \in \mathcal{S}_{PD}$ and $\spann(\{u_{d^*}, u_{d^* +1}, \dots, u_{n}\}) \subseteq \mathcal{S}_{PD}$. The second-to-last line is a result of eigenvalues being bounded by $\gamma^{-1}$. 

Since $\spann(\Phi) \subseteq \spann(\{u_{d^*}, u_{d^* +1}, \dots, u_{n}\})$, we also have $\spann(\Phi) \subseteq \mathcal{S}_{PD}$, and stability ensues from Theorem \ref{thm:positive_definite_representations_stable}.

As a sidenote, we can use this same sequence of steps to show that a representation using only the top eigenvectors of $K$ is always \textit{not stable}. Defining the representation $\Phi = [u_1, u_2, \dots, u_{d^*-1}]$, and following the same set of steps yields that $\langle v, \Ppi v\rangle > \gamma^{-1} \|v\|_\Xi^2$ for any $v \in \spann(\Phi)$. This implies that for this representation, the iteration matrix $\APhi$ is negative-definite, and has \textit{all} eigenvalues with negative real component, therefore not stable.
\end{proof}

\clearpage

\section{Empirical Evaluation}
\label{sec:appendix_experiments}

\subsection{Experimental Setup}
\textbf{Four-room Domain:} The four-room domain \citep{Sutton1999BetweenMA} has 104 discrete states arranged into four ``rooms''. At any state, the agent can take one of four actions corresponding to cardinal directions; if a wall blocks movement in the selected direction, the agent remains in place. 

\textbf{Policy Evaluation:} We augment this domain with a task where the agent must reach the top right corner of the environment. The corresponding reward function is sparse, with the agent receiving +1 reward when it is in the desired state, and zero otherwise. The policy evaluation problem is to find the value function of a near-optimal policy in the environment $\text{Epsilon-Greedy}(\pi^*, \epsilon=0.1)$, which takes the optimal action with probability $0.9$, and a randomly selected action otherwise. Data is collected by rolling out $50$-step trajectories from the center of the bottom-left room with a uniform policy, which samples actions uniformly at random. The discount factor is $\gamma = 0.99$.

\subsection{Exact Evaluation}

In this setting, the exact transition matrix $\Ppi$ and data distribution $\Xi$ are used to create the representation. We compute the decompositions according to Table \ref{table:representation_table} and Appendix \ref{sec:appendix_pd}. Stability is measured for a given representation by explicitly creating the induced iteration matrix, computing the eigenvalues, and checking for real positive parts. To measure accuracy, we considered three metrics (Figure \ref{fig:all_errors}).
\begin{itemize}
    \item \textbf{Policy Accuracy: (displayed in paper)} This measures how well the greedy policy for the true value function matches the greedy policy for the estimated value function. This is given as 
    \[\frac{1}{|\states|} \sum_{s \in \states} \delta(\argmax_a \hat{Q}(s,a) \neq \argmax_a Q^\pi(s,a))\]
    \item \textbf{Optimal Projection Error:} This measures how far the true value function is from the subspace of expressible value functions $\|Q^\pi - \Pi Q^\pi\|_\Xi$. As the number of features increases, this error monotonically decreases, but may not be indicative of the quality of the solution.
    \item \textbf{Bellman Projection Error:} This measures how far the solution reached by \td (the TD-fixed point) is from the true value function: $\|Q^\pi - \Phi \theta_{TD}^*\|_\Xi$. This measure of error is nonmonotonic (adding extra features can cause errors to increase) and unbounded. Furthermore, in the regime of a low number of features, this error greatly underestimates the quality of the recovered solution.  
\end{itemize}

\begin{figure}[H]
    \centering
    \includegraphics[width=\linewidth]{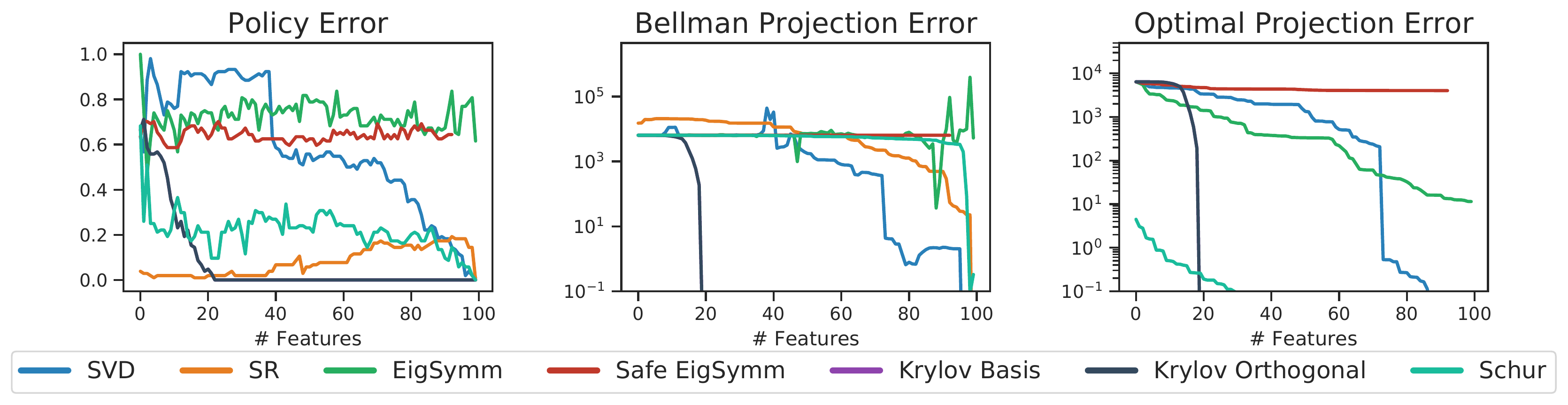}
    \label{fig:all_errors}
\end{figure}
\subsection{Estimation from Samples} 

To measure how well the representations can be measured using samples, we consider the difference between the subspace spanned by the estimated and true representations. In particular, we sample $t$ transitions from the data distribution, and reconstruct the empirical transition matrix $\hat{\Ppi}$ given these transitions. If a particular $(s,a)$ pair is never sampled, the prior we use for the transition matrix is that taking this action deterministically leads back to $s$. We construct the estimated representation as $\hat{\Phi}$, and measure the distance between the true representation $\Phi$ and the estimated representation $\hat{\Phi}$ as $\|\Pi_{\Phi} - \Pi_{\hat{\Phi}}\|_{\Xi, F}$. The Frobenius norm $\|\cdot\|_{\Xi, F}$ is selected in particular as this measures an expected distance, as compared to the maximum distance, measured by the operator norm $\|\cdot\|_\Xi$. 

\subsection{Estimation with Gradient Descent: } 

When learning the representation using gradient descent, we train a network $f(s,a;\theta)$ with one hidden layer with $d$ units with no activation function, that takes in state-action pairs encoded in one-hot form (as vectors in $\R^{|\states \times \actions|}$) and outputs in $\R^d$. In our experiments, $d=21$. The value of the units in the hidden layer is the representation $\phi(s,a;\theta)$. The network is trained with a minibatch size of $32$ for $100,000$ steps, all implemented in Jax. 
\begin{itemize}
    \item \textbf{Schur Decomposition:} To mimic the orthogonal iteration procedure, we use the following training loss function, where $\theta_t$ are the parameters for the target network.
    \[\mathcal{L}(\theta; \theta_t) = \E_{\substack{(s,a) \sim \xi \\ s' \sim P(\cdot| s,a)}}\left[\left\|f(s,a;\theta) - \E_{a' \sim \pi}[\phi(s',a';\theta_t)]\right\|^2\right]\]
    
    This loss is optimized using stochastic gradient descent with a step-size of $4$. The target network is updated every $10,000$ steps, and after every target network update, the representation is renormalized to satisfy $\E_{(s,a) \sim \xi}[\phi(s, a; \theta)_i^2] = 1$.
    
    \item \textbf{Reward Krylov Basis:} We use the following regression training loss function
    \[\mathcal{L}(\theta) = \E_{(s_1,a_1) \sim \xi}\left[ \sum_{i=1}^d \left(f(s,a; \theta)_i - \E_{(s_2, a_2, s_3, a_3, \dots, s_d, a_d)  \sim \Ppi}[r(s_i, a_i)]\right)^2\right]\]
    
    where the inner expectation comes from trajectories that are generated from the policy $\pi$ being evaluated starting from $(s_1,a_1)$. Although this loss requires that the evaluated policy be run in the environment, it serves a didactic purpose to show that these Krylov bases can be learned with additional domain knowledge. This loss is optimized using the Adam optimizer with a learning rate of $10^{-3}$. 
\end{itemize}

\end{document}